\theoremstyle{definition}
\newtheorem{definition}{Definition}
\newtheorem{proposition}{Proposition}
\newtheorem{implication}{Implication}
\ifcvprfinal\pagestyle{empty}\fi
\begin{document}

\title{Class-Balanced Loss Based on Effective Number of Samples}


\author{Yin Cui$^{1,2}$\thanks{The work was performed while Yin Cui and Yang Song worked at Google (a subsidiary of Alphabet Inc.).}\hspace{20pt}Menglin Jia$^{1}$\hspace{20pt}Tsung-Yi Lin$^{3}$\hspace{20pt}Yang Song$^{4}$\hspace{20pt}Serge Belongie$^{1,2}$\\
$^{1}$Cornell University\hspace{20pt}$^{2}$Cornell Tech\hspace{20pt}$^{3}$Google Brain\hspace{20pt}$^{4}$Alphabet Inc.}

\maketitle

\begin{abstract}
With the rapid increase of large-scale, real-world datasets, it becomes critical to address the problem of long-tailed data distribution (\ie, a few classes account for most of the data, while most classes are under-represented). 
Existing solutions typically adopt class re-balancing strategies such as re-sampling and re-weighting based on the number of observations for each class.
In this work, we argue that as the number of samples increases, the additional benefit of a newly added data point will diminish.
We introduce a novel theoretical framework to measure data overlap by associating with each sample a small neighboring region rather than a single point.
The effective number of samples is defined as the volume of samples and can be calculated by a simple formula $(1-\beta^{n})/(1-\beta)$, where $n$ is the number of samples and $\beta \in [0,1)$ is a hyperparameter.
We design a re-weighting scheme that uses the effective number of samples for each class to re-balance the loss, thereby yielding a \emph{class-balanced loss}. Comprehensive experiments are conducted on artificially induced long-tailed CIFAR datasets and large-scale datasets including ImageNet and iNaturalist.
Our results show that when trained with the proposed class-balanced loss, the network is able to achieve significant performance gains on long-tailed datasets. 
\end{abstract}

\begin{figure}[t]
\begin{center}
\includegraphics[width=0.95\columnwidth]{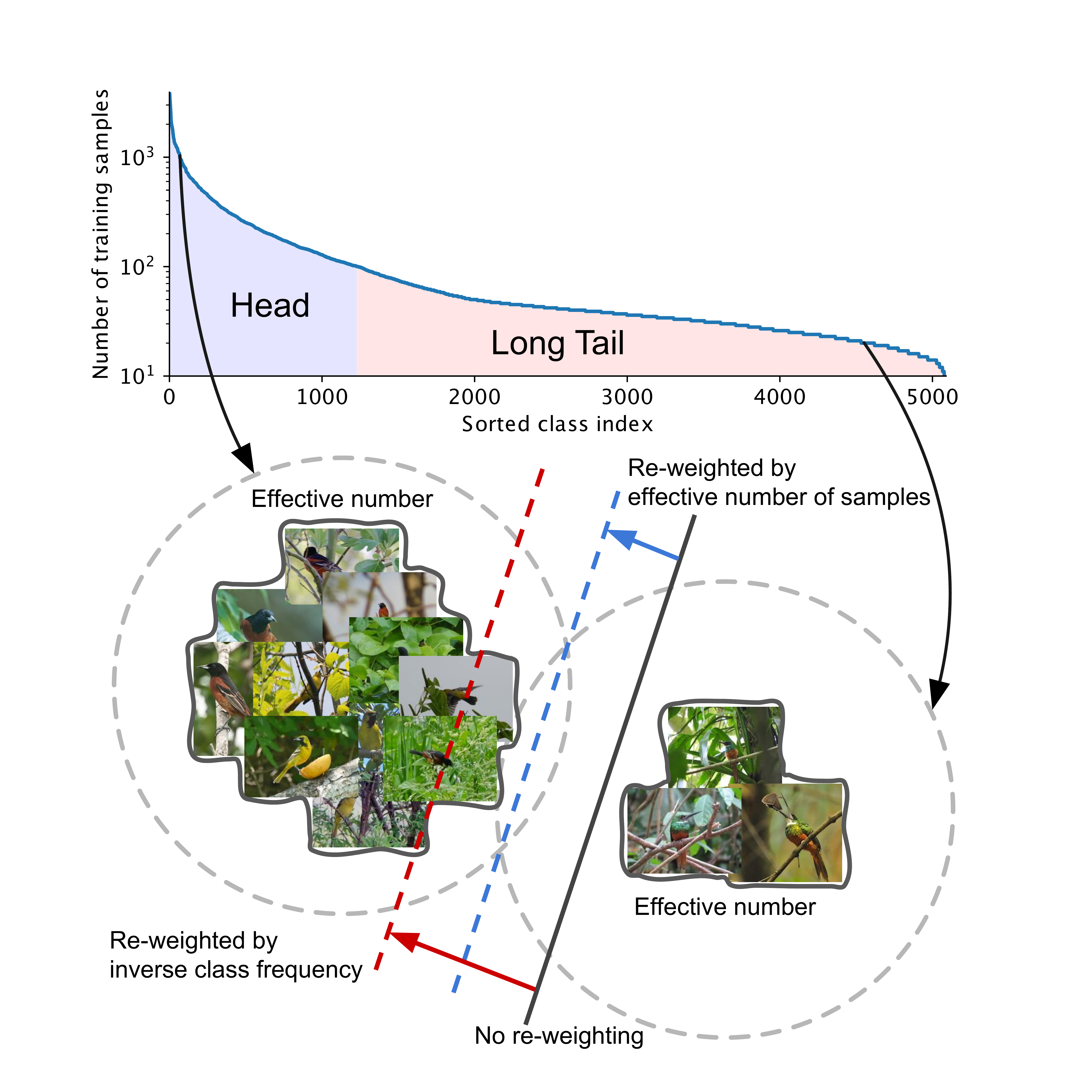}
\end{center}
\caption{Two classes, one from the head and one from the tail of a long-tailed dataset (iNaturalist 2017 \cite{inaturalist} in this example), have drastically different number of samples. Models trained on these samples are biased toward dominant classes (black solid line). Re-weighing the loss by inverse class frequency usually yields poor performance (red dashed line) on real-world data with high class imbalance. We propose a theoretical framework to quantify the effective number of samples by taking data overlap into consideration. A class-balanced term is designed to re-weight the loss by inverse effective number of samples. We show in experiments that the performance of a model can be improved when trained with the proposed class-balanced loss (blue dashed line).}
\label{fig:overview}
\end{figure}

\section{Introduction}
The recent success of deep Convolutional Neural Networks (CNNs) for visual recognition~\cite{alexnet, vggnet, googlenet, resnet} owes much to the availability of large-scale, real-world annotated datasets~\cite{imagenet, coco, places, inaturalist}.
In contrast with commonly used visual recognition datasets (\eg, CIFAR~\cite{cifar, tiny-images}, ImageNet ILSVRC 2012~\cite{imagenet, ilsvrc} and CUB-200 Birds~\cite{cub200}) that exhibit roughly uniform distributions of class labels, real-world datasets have skewed~\cite{kendall1946advanced} distributions, with a \emph{long-tail}: a few dominant classes claim most of the examples, while most of the other classes are represented by relatively few examples.
CNNs trained on such data perform poorly for weakly represented classes~\cite{japkowicz2002class, he2008learning, van2017devil, buda2018systematic}.

A number of recent studies have aimed to alleviate the challenge of long-tailed training data~\cite{bengio2015sharing, ouyang2016factors, huang2016learning, van2017devil, wang2017learning, geifman2017deep, zhang2017range, yin2018feature}.
In general, there are two strategies: re-sampling and cost-sensitive re-weighting.
In re-sampling, the number of examples is directly adjusted by over-sampling (adding repetitive data) for the minor class or under-sampling (removing data) for the major class, or both.
In cost-sensitive re-weighting, we influence the loss function by assigning relatively higher costs to examples from minor classes.
In the context of deep feature representation learning using CNNs, re-sampling may either introduce large amounts of duplicated samples, which slows down the training and makes the model susceptible to overfitting when over-sampling, or discard valuable examples that are important for feature learning when under-sampling.
Due to these disadvantages of applying re-sampling for CNN training, the present work focuses on re-weighting approaches, namely, how to design a better class-balanced loss.

Typically, a class-balanced loss assigns sample weights inversely proportionally to the class frequency.
This simple heuristic method has been widely adopted~\cite{huang2016learning,wang2017learning}.
However, recent work on training from large-scale, real-world, long-tailed datasets~\cite{word2vec, uru} reveals poor performance when using this strategy.
Instead, they use a ``smoothed'' version of weights that are empirically set to be inversely proportional to the square root of class frequency.
These observations suggest an interesting question: how can we design a better class-balanced loss that is applicable to a diverse array of datasets?

We aim to answer this question from the perspective of sample size.
As illustrated in Figure~\ref{fig:overview}, we consider training a model
to discriminate between a major class and a minor class from a long-tailed dataset.
Due to highly imbalanced data, directly training the model or re-weighting the loss by inverse number of samples cannot yield satisfactory performance.
Intuitively, the more data, the better.
However, since there is information overlap among data, as the number of samples increases, the marginal benefit a model can extract from the data diminishes.
In light of this, we propose a novel theoretical framework to characterize data overlap and calculate the effective number of samples in a model- and loss-agnostic manner.
A class-balanced re-weighting term that is inversely proportional to the effective number of samples is added to the loss function.
Extensive experimental results indicate that this class-balanced term provides a significant boost to the performance of commonly used loss functions for training CNNs on long-tailed datasets.

Our key contributions can be summarized as follows:
(1) We provide a theoretical framework to study the effective number of samples and show how to design a class-balanced term to deal with long-tailed training data.
(2) We show that significant performance improvements can be achieved by adding the proposed class-balanced term to existing commonly used loss functions including softmax cross-entropy, sigmoid cross-entropy and focal loss.
In addition, we show our class-balanced loss can be used as a generic loss for visual recognition by outperforming commonly-used softmax cross-entropy loss on ILSVRC 2012.
We believe our study on quantifying the effective number of samples and class-balanced loss can offer useful guidelines for researchers working in domains with long-tailed class distributions.

\section{Related Work}
Most of previous efforts on long-tailed imbalanced data can be divided into two regimes: re-sampling~\cite{shen2016relay,geifman2017deep, buda2018systematic, zou2018unsupervised} (including over-sampling and under-sampling) and cost-sensitive learning~\cite{ting2000comparative, zhou2006training, huang2016learning, khan2018cost, sarafianos2018deep}.

\textbf{Re-Sampling.} Over-sampling adds repeated samples from minor classes, which could cause the model to overfit.
To solve this, novel samples can be either interpolated from neighboring samples~\cite{chawla2002smote} or synthesized~\cite{he2008adasyn, zou2018unsupervised} for minor classes.
However, the model is still error-prone due to noise in the novel samples.
It was argued that even if over-sampling incurs risks from removing important samples, under-sampling is still preferred over over-sampling~\cite{drummond2003c4}.

\textbf{Cost-Sensitive Learning.} Cost-Sensitive Learning can be traced back to a classical method in statistics called importance sampling~\cite{kahn1953methods}, where weights are assigned to samples in order to match a given data distribution.
Elkan~\etal~\cite{elkan2001foundations} studied how to assign weights to adjust the decision boundary to match a given target in the case of binary classification.
For imbalanced datasets, weighting by inverse class frequency~\cite{huang2016learning, wang2017learning} or a smoothed version of inverse square root of class frequency~\cite{word2vec, uru} are often adopted.
As a generalization of smoothed weighting with a theoretically grounded framework, we focus on (a) how to quantify the effective number of samples and (b) using it to re-weight the loss.
Another line of important work aims to study sample difficulty in terms of loss and assign higher weights to hard examples~\cite{boosting, malisiewicz2011ensemble, dong2017class, focal_loss}.
Samples from minor classes tend to have higher losses than those from major classes as the features learned in minor classes are usually poorer.
However, there is no direct connection between sample difficulty and the number of samples.
A side effect of assigning higher weights to hard examples is the focus on harmful samples (\eg, noisy data or mislabeled data)~\cite{koh2017understanding, ren2018learning}.
In our work, we do not make any assumptions on the sample difficulty and data distribution.
By improving the focal loss~\cite{focal_loss} using our class-balanced term in experiments, we show that our method is complementary to re-weighting based on sample difficulty.

It is noteworthy to mention that previous work has also explored other ways of dealing with data imbalance, including transferring the knowledge learned from major classes to minor classes~\cite{bengio2015sharing, ouyang2016factors, wang2017learning, cui2018large, yin2018feature} and designing a better training objective via metric learning~\cite{huang2016learning, zhang2017range, you2018scalable}.

\textbf{Covering and Effective Sample Size.}
Our theoretical framework is inspired by the random covering problem~\cite{janson1986random}, where the goal is to cover a large set by a sequence of i.i.d. random small sets.
We simplify the problem in Section~\ref{sec:effec_num} by making reasonable assumptions.
Note that the effective number of samples proposed in this paper is different from the concept of effective sample size in statistics.
The effective sample size is used to calculate variance when samples are correlated.

\section{Effective Number of Samples}
\label{sec:effec_num}
We formulate the data sampling process as a simplified version of random covering.
The key idea is to associate each sample with a small neighboring region instead of a single point.
We present our theoretical framework and the formulation of calculating effective number of samples.

\subsection{Data Sampling as Random Covering}
Given a class, denote the set of all possible data in the feature space of this class as $\mathcal{S}$.
We assume the volume of $\mathcal{S}$ is $N$ and $N \geq 1$.
Denote each data as a subset of $\mathcal{S}$ that has the unit volume of $1$ and may overlap with other data.
Consider the data sampling process as a random covering problem where each data (subset) is randomly sampled from $\mathcal{S}$ to cover the entire set of $\mathcal{S}$.
The more data is being sampled, the better the coverage of $\mathcal{S}$ is.
The expected total volume of sampled data increases as the number of data increases and is bounded by $N$. Therefore, we define:

\theoremstyle{definition}
\begin{definition}[Effective Number]
\label{def:1}
The \textit{effective number} of samples is the expected volume of samples.
\end{definition}

The calculation of the expected volume of samples is a very difficult problem that depends on the shape of the sample and the dimensionality of the feature space~\cite{janson1986random}.
To make the problem tamable, we simplify the problem by not considering the situation of partial overlapping.
That is, we assume a newly sampled data can only interact with previously sampled data in two ways: either entirely inside the set of previously sampled data with the probability of $p$ or entirely outside with the probability of $1-p$, as illustrated in Figure~\ref{fig:covering}.
As the number of sampled data points increases, the probability $p$ will be higher.

Before we dive into the mathematical formulations, we discuss the connection between our definition of effective number of samples and real-world visual data.
Our idea is to capture the diminishing marginal benefits by using more data points of a class.
Due to intrinsic similarities among real-world data, as the number of samples grows, it is highly possible that a newly added sample is a near-duplicate of existing samples.
In addition, CNNs are trained with heavy data augmentations, where simple transformations such as random cropping, re-scaling and horizontal flipping will be applied to the input data.
In this case, all augmented examples are also considered as same with the original example.
Presumably, the stronger the data augmentation is, the smaller the $N$ will be.
The small neighboring region of a sample is a way to capture all near-duplicates and instances that can be obtained by data augmentation.
For a class, $N$ can be viewed as the number of \textit{unique prototypes}.

\begin{figure}[t]
\begin{center}
\includegraphics[width=0.95\columnwidth]{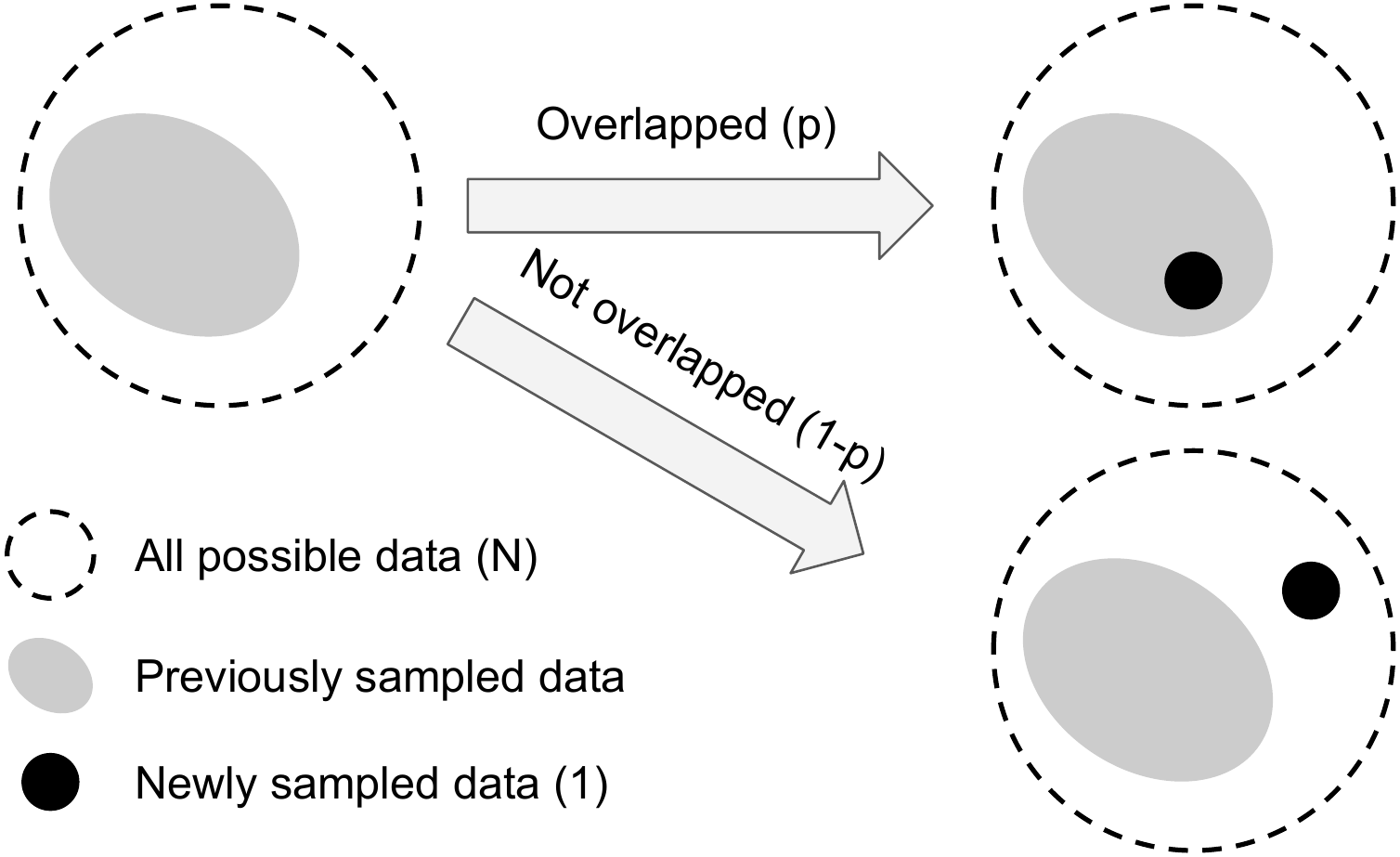}
\end{center}
\caption{Giving the set of all possible data with volume $N$ and the set of previously sampled data, a new sample with volume $1$ has the probability of $p$ being overlapped with previous data and the probability of $1-p$ not being overlapped.}
\label{fig:covering}
\end{figure}

\subsection{Mathematical Formulation}
Denote the effective number (expected volume) of samples as $E_n$, where $n \in \mathbb{Z}_{>0}$ is the number of samples. 

\begin{proposition}[Effective Number]
\label{prop:1}
$E_n = (1-\beta^n)/(1-\beta)$, where $\beta = (N-1)/N$.
\end{proposition}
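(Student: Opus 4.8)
The plan is to establish the formula by induction on $n$, after first deriving a one-step recurrence relating $E_n$ to $E_{n-1}$ through the geometry of the covering process. The base case is immediate: the first sample has no prior data to overlap and always contributes unit volume, so $E_1 = 1$, matching $(1-\beta^1)/(1-\beta) = 1$.

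For the inductive step I would condition on the configuration after $n-1$ samples. Let $V_{n-1}$ denote the (random) volume already covered, so that $E_{n-1} = \mathbb{E}[V_{n-1}]$. Under the simplifying assumption that a new sample lands either entirely inside or entirely outside the previously covered region, the natural choice is to set the overlap probability equal to the fraction of $\mathcal{S}$ already occupied, $p = V_{n-1}/N$. Then, conditionally on $V_{n-1}$, the $n$-th sample adds volume $1$ with probability $1-p$ and volume $0$ with probability $p$, so the conditional expected volume is $\mathbb{E}[V_n \mid V_{n-1}] = V_{n-1} + (1 - V_{n-1}/N)$.

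Taking expectations and using $1 - 1/N = (N-1)/N = \beta$ together with linearity yields the clean recurrence $E_n = 1 + \beta E_{n-1}$. From here the induction hypothesis $E_{n-1} = (1-\beta^{n-1})/(1-\beta)$ closes the argument: substituting gives $E_n = 1 + \beta(1-\beta^{n-1})/(1-\beta) = (1-\beta+\beta-\beta^n)/(1-\beta) = (1-\beta^n)/(1-\beta)$. Equivalently, unrolling the recurrence exhibits $E_n$ directly as the geometric sum $1 + \beta + \cdots + \beta^{n-1}$.

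The main obstacle is not the algebra, which is routine, but justifying the identification $p = V_{n-1}/N$. This is the crux where the ``no partial overlap'' simplification does the heavy lifting: I would argue that, treating the location of the new unit-volume sample as uniform over $\mathcal{S}$, the probability that it falls within the already-covered set is exactly the ratio of covered volume to total volume $N$, and the all-or-nothing assumption is what lets a single scalar probability summarize the interaction and keeps the recurrence one-dimensional. As sanity checks I would verify that $\beta = 0$ (when $N=1$) forces $E_n = 1$ and that $\beta \to 1$ (as $N \to \infty$) recovers $E_n \to n$, confirming the formula interpolates correctly between complete overlap and no overlap.
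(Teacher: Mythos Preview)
Your proof is correct and matches the paper's approach almost exactly: induction on $n$ via the recurrence $E_n = 1 + \beta E_{n-1}$, followed by the same algebraic substitution. The only difference is cosmetic---you condition on the random covered volume $V_{n-1}$ and then take expectations, whereas the paper plugs $p = E_{n-1}/N$ directly into the recurrence; by linearity these are equivalent, and your version is arguably the cleaner justification.
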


\begin{proof}
We prove the proposition by induction.
It is obvious that $E_1 = 1$ because there is no overlapping.
So $E_1 = (1-\beta^1)/(1-\beta) = 1$ holds.
Now let's consider a general case where we have previously sampled $n-1$ examples and are about to sample the $n^{th}$ example.
Now the expected volume of previously sampled data is $E_{n-1}$ and the newly sampled data point has the probability of $p = E_{n-1} / N$ to be overlapped with previous samples. Therefore, the expected volume after sampling $n^{th}$ example is:
\begin{equation}
\label{eqn:1}
E_n = p E_{n-1} + (1-p) (E_{n-1} + 1) = 1 + \frac{N-1}{N} E_{n-1}.
\end{equation}
Assume $E_{n-1} = (1-\beta^{n-1})/(1-\beta)$ holds, then
\begin{equation}
\label{eqn:2}
E_n = 1 + \beta \frac{1-\beta^{n-1}}{1-\beta} = \frac{1-\beta+\beta-\beta^{n}}{1-\beta} = \frac{1-\beta^{n}}{1-\beta}.
\end{equation}
\end{proof}

The above proposition shows that the effective number of samples is an exponential function of $n$.
The hyperparameter $\beta \in [0, 1)$ controls how fast $E_n$ grows as $n$ increases.

Another explanation of the effective number $E_n$ is:
\begin{equation}
\label{eqn:3}
E_n = (1-\beta^n)/(1-\beta) = \sum_{j=1}^{n} \beta^{j-1}.
\end{equation}
This means that the $j^{th}$ sample contributes $\beta^{j-1}$ to the effective number. The total volume $N$ for all possible data in the class can then be calculated as:
\begin{equation}
N = \lim_{n \to \infty}\sum_{j=1}^{n} \beta^{j-1} = 1 / (1 - \beta).
\end{equation}
This is consistent with our definition of $\beta$ in the proposition.

\begin{implication}[Asymptotic Properties]
\label{impl:1}
$E_n = 1$ if $\beta=0$ ($N=1$). $E_n \to n$ as $\beta \to 1$ ($N \to \infty$).
\end{implication}

\begin{proof}
If $\beta=0$, then $E_n = (1 - 0^n)/(1-0) = 1$.
In the case of $\beta \to 1$, denote $f(\beta) = 1-\beta^{n}$ and $g(\beta) = 1-\beta$.
Since $\lim_{\beta \to 1}f(\beta) = \lim_{\beta \to 1}g(\beta) = 0$, $g'(\beta) = -1 \neq 0$ and $\lim_{\beta \to 1} f'(\beta) / g'(\beta) = \lim_{\beta \to 1} (-n \beta^{n-1})/(-1) = n$ exists, using L'H\^{o}pital's rule, we have
\begin{equation}
\label{eqn:4}
\lim_{\beta \to 1} E_n = \lim_{\beta \to 1} \frac{f(\beta)}{g(\beta)} = \lim_{\beta \to 1} \frac{f'(\beta)}{g'(\beta)} = n. 
\end{equation}
\end{proof}

The asymptotic property of $E_n$ shows that when $N$ is large, the effective number of samples is same as the number of samples $n$.
In this scenario, we think the number of unique prototypes $N$ is large, thus there is no data overlap and every sample is unique.
On the other extreme, if $N=1$, this means that we believe there exist a single prototype so that all the data in this class can be represented by this prototype via data augmentation, transformations,~\etc.

\section{Class-Balanced Loss}
\label{sec:cb_loss}
The \textit{Class-Balanced Loss} is designed to address the problem of training from imbalanced data by introducing a weighting factor that is inversely proportional to the effective number of samples.
The class-balanced loss term can be applied to a wide range of deep networks and loss functions.

For an input sample $\mathbf{x}$ with label $y \in \{1, 2, \dots, C\}$~\footnote{For simplicity, we derive the loss function by assuming there is only one ground-truth label for a sample.}, where $C$ is the total number of classes, suppose the model's estimated class probabilities are $\mathbf{p} = [p_1, p_2, \dots, p_C]^\top$, where $p_i \in [0, 1]\ \forall\ i$, we denote the loss as $\mathcal{L}(\mathbf{p}, y)$.
Suppose the number of samples for class $i$ is $n_i$, based on Equation~\ref{eqn:2}, the proposed effective number of samples for class $i$ is $E_{n_i} = (1 - \beta_i^{n_i})/(1 - \beta_i)$, where $\beta_i = (N_i-1)/N_i$.
Without further information of data for each class, it is difficult to empirically find a set of good hyperparameters $N_i$ for all classes.
Therefore, in practice, we assume $N_i$ is only dataset-dependent and set $N_i = N$, $\beta_i = \beta = (N-1)/N$ for all classes in a dataset.

To balance the loss, we introduce a weighting factor $\alpha_i$ that is inversely proportional to the effective number of samples for class $i$: $\alpha_i \propto 1/E_{n_i}$.
To make the total loss roughly in the same scale when applying $\alpha_i$, we normalize $\alpha_i$ so that $\sum_{i=1}^C \alpha_i = C$.
For simplicity, we abuse the notation of $1/E_{n_i}$ to denote the normalized weighting factor in the rest of our paper.


Formally speaking, given a sample from class $i$ that contains $n_i$ samples in total, we propose to add a weighting factor $(1 - \beta)/(1 - \beta^{n_i})$ to the loss function, with hyperparameter $\beta \in [0, 1)$. The class-balanced (CB) loss can be written as:
\begin{equation}
\label{eqn:cb_loss}
\text{CB}(\mathbf{p}, y) = \frac{1}{E_{n_y}} \mathcal{L}(\mathbf{p}, y) = \frac{1 - \beta}{1 - \beta^{n_y}}\ \mathcal{L}(\mathbf{p}, y),
\end{equation}
where $n_y$ is the number of samples in the ground-truth class $y$.
We visualize class-balanced loss in Figure~\ref{fig:cb_loss} as a function of $n_y$ for different $\beta$.
Note that $\beta=0$ corresponds to no re-weighting and $\beta \to 1$ corresponds to re-weighing by inverse class frequency.
The proposed novel concept of effective number of samples enables us to use a hyperparameter $\beta$ to smoothly adjust the class-balanced term between no re-weighting and re-weighing by inverse class frequency.

The proposed class-balanced term is model-agnostic and loss-agnostic in the sense that it's independent to the choice of loss function $\mathcal{L}$ and predicted class probabilities $\mathbf{p}$.
To demonstrate the proposed class-balanced loss is generic, we show how to apply class-balanced term to three commonly used loss functions: softmax cross-entropy loss, sigmoid cross-entropy loss and focal loss.

\begin{figure}[t]
\begin{center}
\includegraphics[width=0.95\columnwidth]{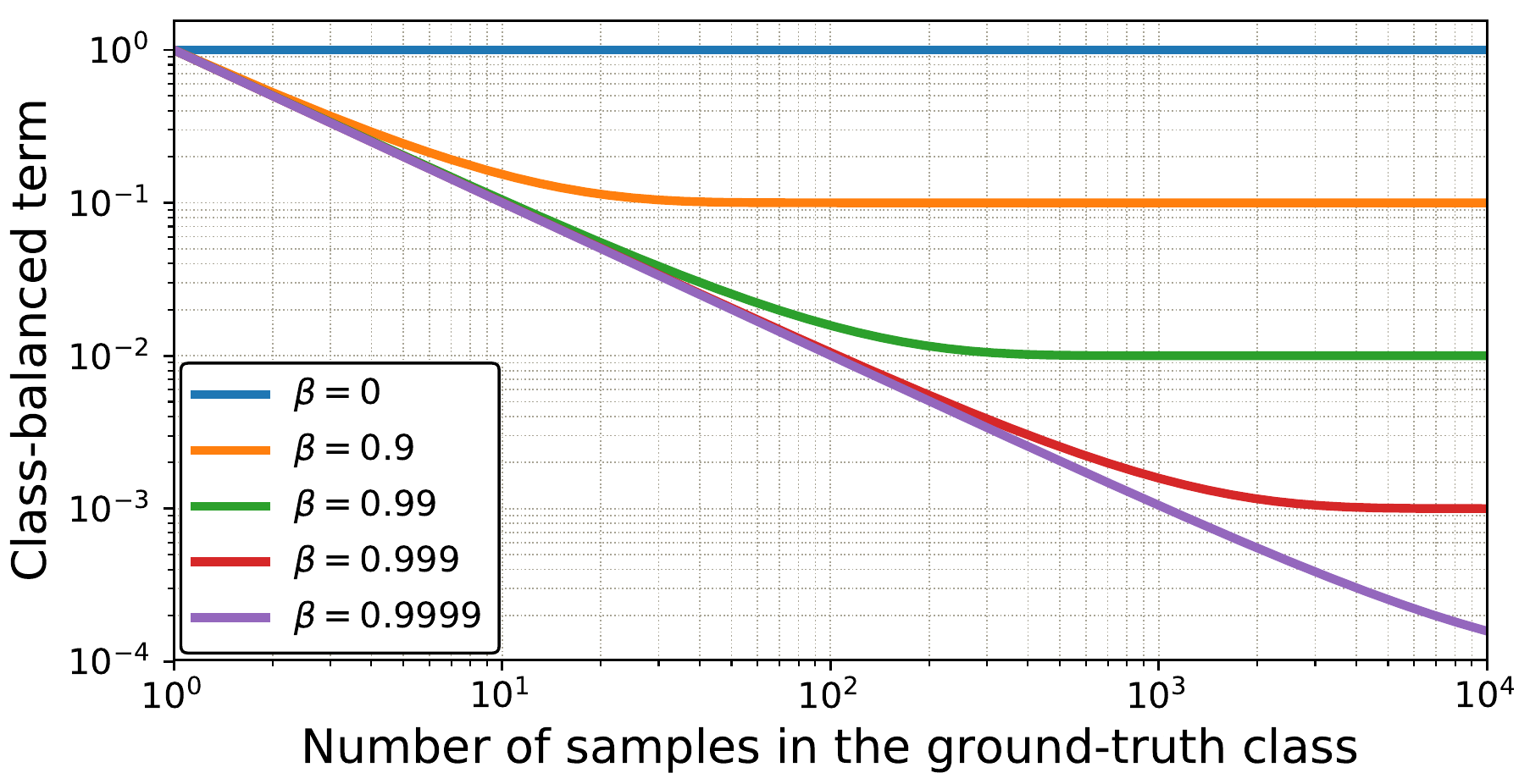}
\end{center}
\caption{Visualization of the proposed class-balanced term $(1 - \beta)/(1 - \beta^{n_y})$, where $n_y$ is the number of samples in the ground-truth class. Both axes are in log scale. For a long-tailed dataset where major classes have significantly more samples than minor classes, setting $\beta$ properly re-balances the relative loss across classes and reduces the drastic imbalance of re-weighing by inverse class frequency.}
\label{fig:cb_loss}
\end{figure}

\subsection{Class-Balanced Softmax Cross-Entropy Loss}
Suppose the predicted output from the model for all classes are $\mathbf{z} = [z_1, z_2, \dots, z_C]^\top$, where $C$ is the total number of classes.
The softmax function regards each class as mutual exclusive and calculate the probability distribution over all classes as $p_i = \exp(z_i)/\sum_{j=1}^C \exp(z_j), \forall\ i \in \{1, 2, \dots, C\}$.
Given a sample with class label $y$, the softmax cross-entropy (CE) loss for this sample is written as:
\begin{equation}
\text{CE}_{\text{softmax}}(\mathbf{z}, y) = -\log\left(\frac{\exp(z_y)}{\sum_{j=1}^C \exp(z_j)}\right).
\end{equation}
Suppose class $y$ has $n_y$ training samples, the class-balanced (CB) softmax cross-entropy loss is:
\begin{equation}
\text{CB}_{\text{softmax}}(\mathbf{z}, y) = -\frac{1 - \beta}{1 - \beta^{n_y}} \log\left(\frac{\exp(z_y)}{\sum_{j=1}^C \exp(z_j)}\right).
\end{equation}

\subsection{Class-Balanced Sigmoid Cross-Entropy Loss}
Different from softmax, class-probabilities calculated by sigmoid function assume each class is independent and not mutually exclusive.
When using sigmoid function, we regard multi-class visual recognition as multiple binary classification tasks, where each output node of the network is performing a one-vs-all classification to predict the probability of the target class over the rest of classes.
Compared with softmax, sigmoid presumably has two advantages for real-world datasets:
(1) Sigmoid doesn't assume the mutual exclusiveness among classes, which aligns well with real-world data, where a few classes might be very similar to each other, especially in the case of large number of fine-grained classes.
(2) Since each class is considered independent and has its own predictor, sigmoid unifies single-label classification with multi-label prediction. This is a nice property to have since real-world data often has more than one semantic label.

Using same notations as softmax cross-entropy, for simplicity, we define $z_i^t$ as:
\begin{equation}
z_i^t = \begin{cases}z_i,       & \text{if}\ i = y.\\
-z_i,  & \text{otherwise}.
\end{cases}
\end{equation}
Then the sigmoid cross-entropy (CE) loss can be written as:
\begin{equation}
\begin{aligned}
\text{CE}_{\text{sigmoid}}(\mathbf{z}, y) &= - \sum_{i=1}^C \log\left(\text{sigmoid}(z_i^t)\right) \\
&= - \sum_{i=1}^C \log\left(\frac{1}{1 + \exp(-z_i^t)}\right).
\end{aligned}
\end{equation}
The class-balanced (CB) sigmoid cross-entropy loss is:
\begin{equation}
\text{CB}_{\text{sigmoid}}(\mathbf{z}, y) = - \frac{1 - \beta}{1 - \beta^{n_y}} \sum_{i=1}^C \log\left(\frac{1}{1 + \exp(-z_i^t)}\right).
\end{equation}

\subsection{Class-Balanced Focal Loss}
The recently proposed focal loss (FL)~\cite{focal_loss} adds a modulating factor to the sigmoid cross-entropy loss to reduce the relative loss for well-classified samples and focus on difficult samples.
Denote $p_i^t = \text{sigmoid}(z_i^t) = 1/(1 + \exp(-z_i^t))$, the focal loss can be written as:
\begin{equation}
\text{FL}(\mathbf{z}, y) = - \sum_{i=1}^C (1 - p_i^t)^{\gamma}\ \log(p_i^t).
\end{equation}
The class-balanced (CB) focal loss is:
\begin{equation}
\text{CB}_{\text{focal}}(\mathbf{z}, y) = - \frac{1 - \beta}{1 - \beta^{n_y}} \sum_{i=1}^C (1 - p_i^t)^{\gamma}\ \log(p_i^t).
\end{equation}

The original focal loss has an $\alpha$-balanced variant.
The class-balanced focal loss is same as $\alpha$-balanced focal loss when $\alpha_t = (1 - \beta)/(1 - \beta^{n_y})$.
Therefore, the class-balanced term can be viewed as an explicit way to set $\alpha_t$ in focal loss based on the effective number of samples.

\section{Experiments}
The proposed class-balanced losses are evaluated on artificially created long-tailed CIFAR~\cite{cifar} datasets with controllable degrees of data imbalance and real-world long-tailed datasets iNaturalist 2017~\cite{inaturalist} and 2018~\cite{inat18}.
To demonstrate our loss is generic for visual recognition, we also present experiments on ImageNet data (ILSVRC 2012~\cite{ilsvrc}).
We use deep residual networks (ResNet)~\cite{resnet} with various depths and train all networks from scratch.

\begin{figure}[t]
\begin{center}
\includegraphics[width=0.95\columnwidth]{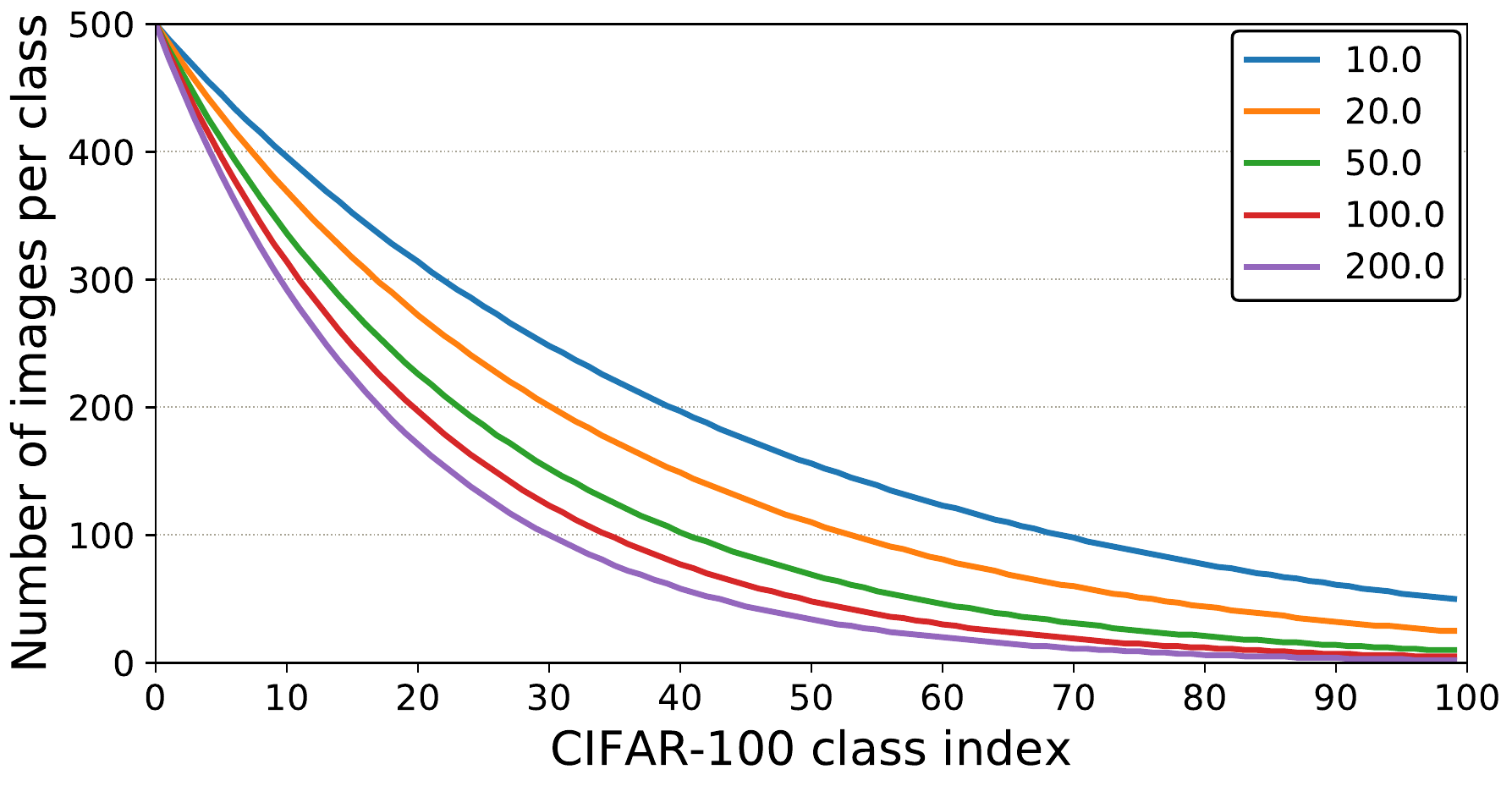}
\end{center}
\caption{Number of training samples per class in artificially created long-tailed CIFAR-100 datasets with different imbalance factors.}
\label{fig:cifar_data}
\end{figure}

\begin{table}[t]
\begin{center}
\begin{tabular}{ |l|r|r| } 
\hline
Dataset Name & \# Classes & Imbalance \\ \hline 
Long-Tailed CIFAR-10    & 10 & 10.00 - 200.00 \\
Long-Tailed CIFAR-100   & 100 & 10.00 - 200.00 \\
iNaturalist 2017        & 5,089 & 435.44 \\
iNaturalist 2018        & 8,142 & 500.00 \\
ILSVRC 2012    & 1,000 & 1.78 \\
\hline
\end{tabular}
\end{center}
\caption{Datasets that are used to evaluate the effectiveness of class-balanced loss. We created 5 long-tailed versions of both CIFAR-10 and CIFAR-100 with imbalance factors of 10, 20, 50, 100 and 200 respectively.}
\label{tab:dataset}
\end{table}

\begin{table*}[t]
\small
\begin{center}
\begin{tabular}{ lcccccccccccc }
\hline
\multicolumn{1}{l}{Dataset Name} &  \multicolumn{6}{|c}{Long-Tailed CIFAR-10} & \multicolumn{6}{|c}{Long-Tailed CIFAR-100} \\
\hline
 \multicolumn{1}{l}{Imbalance} & \multicolumn{1}{|c}{200} & \multicolumn{1}{|c}{100} & \multicolumn{1}{|c}{50} & \multicolumn{1}{|c}{20} & \multicolumn{1}{|c}{10} & \multicolumn{1}{|c}{1} & \multicolumn{1}{|c}{200} & \multicolumn{1}{|c}{100} & \multicolumn{1}{|c}{50} & \multicolumn{1}{|c}{20} & \multicolumn{1}{|c}{10} & \multicolumn{1}{|c}{1} \\ \hline
\multicolumn{1}{l}{Softmax} & \multicolumn{1}{|r}{\textbf{34.32}} & \multicolumn{1}{|r}{29.64} & \multicolumn{1}{|r}{25.19} & \multicolumn{1}{|r}{17.77} & \multicolumn{1}{|r}{13.61} & \multicolumn{1}{|r}{6.61} & \multicolumn{1}{|r}{65.16} & \multicolumn{1}{|r}{61.68} & \multicolumn{1}{|r}{56.15} & \multicolumn{1}{|r}{48.86} & \multicolumn{1}{|r}{44.29} & \multicolumn{1}{|r}{29.07} \\
\multicolumn{1}{l}{Sigmoid} & \multicolumn{1}{|r}{34.51} & \multicolumn{1}{|r}{\textbf{29.55}} & \multicolumn{1}{|r}{23.84} & \multicolumn{1}{|r}{\textbf{16.40}} & \multicolumn{1}{|r}{\textbf{12.97}} & \multicolumn{1}{|r}{\textbf{6.36}} & \multicolumn{1}{|r}{64.39} & \multicolumn{1}{|r}{\textbf{61.22}} & \multicolumn{1}{|r}{55.85} & \multicolumn{1}{|r}{48.57} & \multicolumn{1}{|r}{44.73} & \multicolumn{1}{|r}{\textbf{28.39}} \\
\multicolumn{1}{l}{Focal ($\gamma = 0.5$)} & \multicolumn{1}{|r}{36.00} & \multicolumn{1}{|r}{29.77} & \multicolumn{1}{|r}{\textbf{23.28}} & \multicolumn{1}{|r}{17.11} & \multicolumn{1}{|r}{13.19} & \multicolumn{1}{|r}{6.75} & \multicolumn{1}{|r}{65.00} & \multicolumn{1}{|r}{61.31} & \multicolumn{1}{|r}{55.88} & \multicolumn{1}{|r}{48.90} & \multicolumn{1}{|r}{44.30} & \multicolumn{1}{|r}{28.55} \\
\multicolumn{1}{l}{Focal ($\gamma = 1.0$)} & \multicolumn{1}{|r}{34.71} & \multicolumn{1}{|r}{29.62} & \multicolumn{1}{|r}{23.29} & \multicolumn{1}{|r}{17.24} & \multicolumn{1}{|r}{13.34} & \multicolumn{1}{|r}{6.60} & \multicolumn{1}{|r}{\textbf{64.38}} & \multicolumn{1}{|r}{61.59} & \multicolumn{1}{|r}{\textbf{55.68}} & \multicolumn{1}{|r}{\textbf{48.05}} & \multicolumn{1}{|r}{\textbf{44.22}} & \multicolumn{1}{|r}{28.85} \\
\multicolumn{1}{l}{Focal ($\gamma = 2.0$)} & \multicolumn{1}{|r}{35.12} & \multicolumn{1}{|r}{30.41} & \multicolumn{1}{|r}{23.48} & \multicolumn{1}{|r}{16.77} & \multicolumn{1}{|r}{13.68} & \multicolumn{1}{|r}{6.61} & \multicolumn{1}{|r}{65.25} & \multicolumn{1}{|r}{61.61} & \multicolumn{1}{|r}{56.30} & \multicolumn{1}{|r}{48.98} & \multicolumn{1}{|r}{45.00} & \multicolumn{1}{|r}{28.52} \\ \hline \hline
\multicolumn{1}{l}{Class-Balanced} & \multicolumn{1}{|r}{\textbf{31.11}} & \multicolumn{1}{|r}{\textbf{25.43}} & \multicolumn{1}{|r}{\textbf{20.73}} & \multicolumn{1}{|r}{\textbf{15.64}} & \multicolumn{1}{|r}{\textbf{12.51}} & \multicolumn{1}{|r}{\textbf{6.36}$^*$} & \multicolumn{1}{|r}{\textbf{63.77}} & \multicolumn{1}{|r}{\textbf{60.40}} & \multicolumn{1}{|r}{\textbf{54.68}} & \multicolumn{1}{|r}{\textbf{47.41}} & \multicolumn{1}{|r}{\textbf{42.01}} & \multicolumn{1}{|r}{\textbf{28.39}$^*$} \\ \hline
\multicolumn{1}{l}{Loss Type} & \multicolumn{1}{|r}{SM} & \multicolumn{1}{|r}{Focal} & \multicolumn{1}{|r}{Focal} & \multicolumn{1}{|r}{SM} & \multicolumn{1}{|r}{SGM} & \multicolumn{1}{|r}{SGM} & \multicolumn{1}{|r}{Focal} & \multicolumn{1}{|r}{Focal} & \multicolumn{1}{|r}{SGM} & \multicolumn{1}{|r}{Focal} & \multicolumn{1}{|r}{Focal} & \multicolumn{1}{|r}{SGM} \\
\multicolumn{1}{l}{$\beta$} & \multicolumn{1}{|r}{0.9999} & \multicolumn{1}{|r}{0.9999} & \multicolumn{1}{|r}{0.9999} & \multicolumn{1}{|r}{0.9999} & \multicolumn{1}{|r}{0.9999} & \multicolumn{1}{|r}{-} & \multicolumn{1}{|r}{0.9} & \multicolumn{1}{|r}{0.9} & \multicolumn{1}{|r}{0.99} & \multicolumn{1}{|r}{0.99} & \multicolumn{1}{|r}{0.999} & \multicolumn{1}{|r}{-} \\
\multicolumn{1}{l}{$\gamma$} & \multicolumn{1}{|r}{-} & \multicolumn{1}{|r}{1.0} & \multicolumn{1}{|r}{2.0} & \multicolumn{1}{|r}{-} & \multicolumn{1}{|r}{-} & \multicolumn{1}{|r}{-} & \multicolumn{1}{|r}{1.0} & \multicolumn{1}{|r}{1.0} & \multicolumn{1}{|r}{-} & \multicolumn{1}{|r}{0.5} & \multicolumn{1}{|r}{0.5} & \multicolumn{1}{|r}{-} \\ \hline
\end{tabular}
\end{center}
\caption{Classification error rate of ResNet-32 trained with different loss functions on long-tailed CIFAR-10 and CIFAR-100. We show best results of class-balanced loss with best hyperparameters (SM represents Softmax and SGM represents Sigmoid) chosen via cross-validation. Class-balanced loss is able to achieve significant performance gains. $*$ denotes the case when each class has same number of samples, class-balanced term is always $1$ therefore it reduces to the original loss function.}
\label{tab:cifar_results}
\end{table*}

\subsection{Datasets}
\label{sec:data}
\textbf{Long-Tailed CIFAR.}
To analyze the proposed class-balanced loss, long-tailed versions of CIFAR~\cite{cifar} are created by reducing the number of training samples per class according to an exponential function $n = n_i \mu^i$, where $i$ is the class index ($0$-indexed), $n_i$ is the original number of training images and $\mu \in (0, 1)$. 
The test set remains unchanged.
We define the imbalance factor of a dataset as the number of training samples in the largest class divided by the smallest.
Figure~\ref{fig:cifar_data} shows number of training images per class on long-tailed CIFAR-100 with imbalance factors ranging from $10$ to $200$.
We conduct experiments on long-tailed CIFAR-10 and CIFAR-100.

\textbf{iNaturalist.}
The recently introduced iNaturalist species classification and detection dataset~\cite{inaturalist} is a real-world long-tailed dataset containing 579,184 training images from 5,089 classes in its 2017 version and 437,513 training images from 8,142 classes in its 2018 version~\cite{inat18}. We use the official training and validation splits in our experiments.

\textbf{ImageNet.}
We use the ILSVRC 2012~\cite{ilsvrc} split containing 1,281,167 training and 50,000 validation images.

Table~\ref{tab:dataset} summarizes all datasets used in our experiments along with their imbalance factors.

\subsection{Implementation}
\label{sec:implementation}

\textbf{Training with sigmoid-based losses.}
Conventional training scheme of deep networks initializes the last linear classification layer with bias $b=0$.
As pointed out by Lin~\etal~\cite{focal_loss}, this could cause instability of training when using sigmoid function to get class probabilities.
This is because using $b=0$ with sigmoid function in the last layer induces huge loss at the beginning of the training as the output probability for each class is close to $0.5$.
Therefore, for training with sigmoid cross-entropy loss and focal loss, we assume the class prior is $\pi=1/C$ for each class, where $C$ is the number of classes, and initialize the bias of the last layer as $b = -\log\left(\left(1 - \pi\right)/\pi\right)$.
In addition, we remove the $L_2$ regularization (weight decay) for the bias $b$ of the last layer.

We used Tensorflow~\cite{tensorflow} to implement and train all the models by stochastic gradient descent with momentum.
We trained residual networks with 32 layers (ResNet-32) to conduct all experiments on CIFAR.
Similar to Zagoruyko~\etal~\cite{zagoruyko2016wide}, we noticed a disturbing effect in training ResNets on CIFAR that both loss and validation error gradually went up after the learning rate drop, especially in the case of high data imbalance.
We found that setting learning rate decay to $0.01$ instead of $0.1$ solved the problem. 
Models on CIFAR were trained with batch size of $128$ on a single NVIDIA Titan X GPU for $200$ epochs.
The initial learning rate was set to $0.1$, which was then decayed by $0.01$ at $160$ epochs and again at $180$ epochs. 
We also used linear warm-up of learning rate~\cite{goyal2017accurate} in the first $5$ epochs.
On iNaturalist and ILSVRC 2012 data, we followed the same training strategy used by Goyal~\etal~\cite{goyal2017accurate} and trained residual networks with batch size of $1024$ on a single cloud TPU.
Since the scale of focal loss is smaller than softmax and sigmoid cross-entropy loss, when training with focal loss, we used $2 \times$ and $4 \times$ larger learning rate on ILSVRC 2012 and iNaturalist respectively.
Code, data and pre-trained models are available at: {\footnotesize\url{https://github.com/richardaecn/class-balanced-loss}}.

\begin{figure*}[t]
\begin{center}
\subfigure{
\includegraphics[width=1.0\columnwidth]{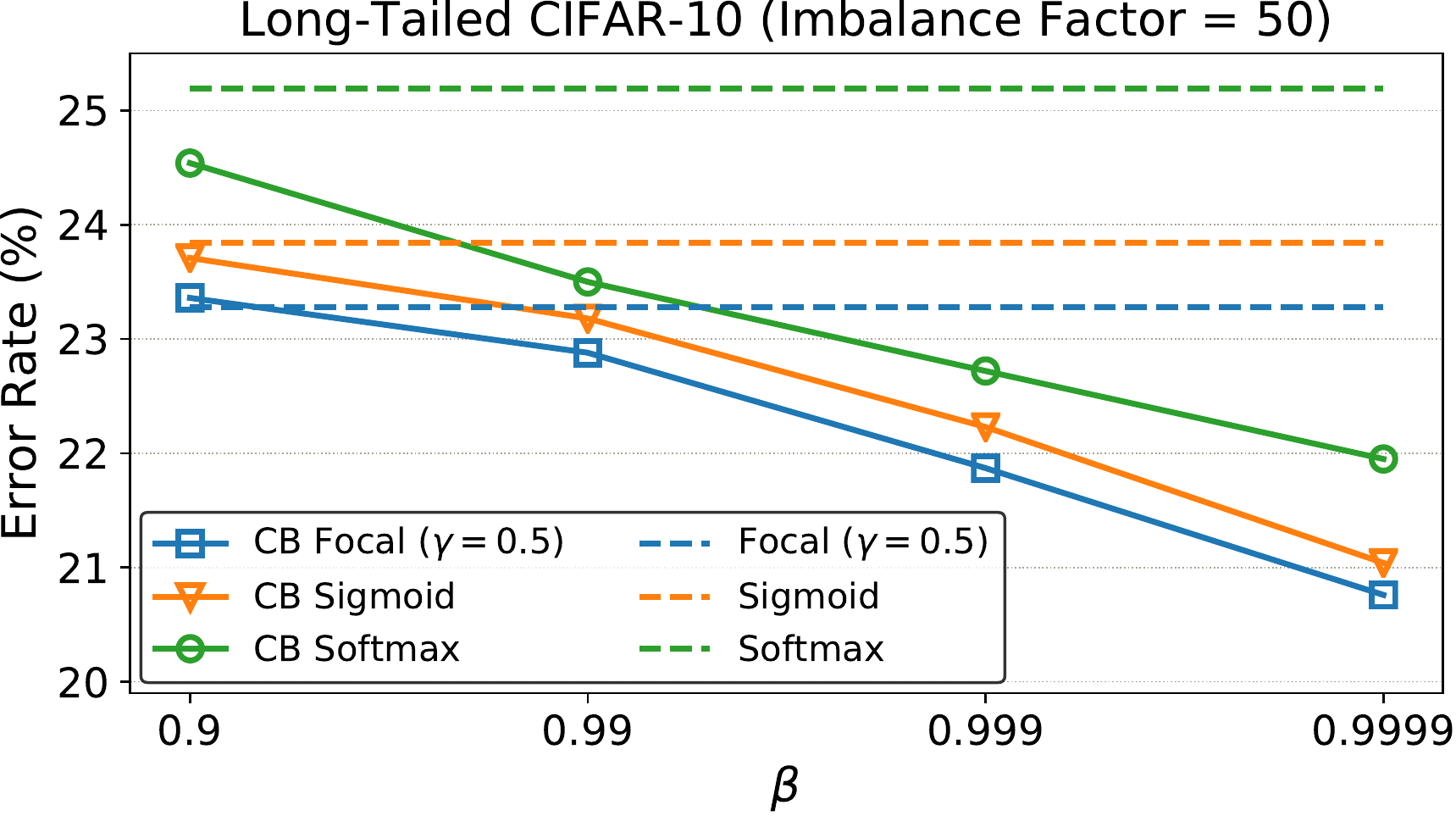}
\label{fig:cifar10_beta}}
\subfigure{
\includegraphics[width=1.0\columnwidth]{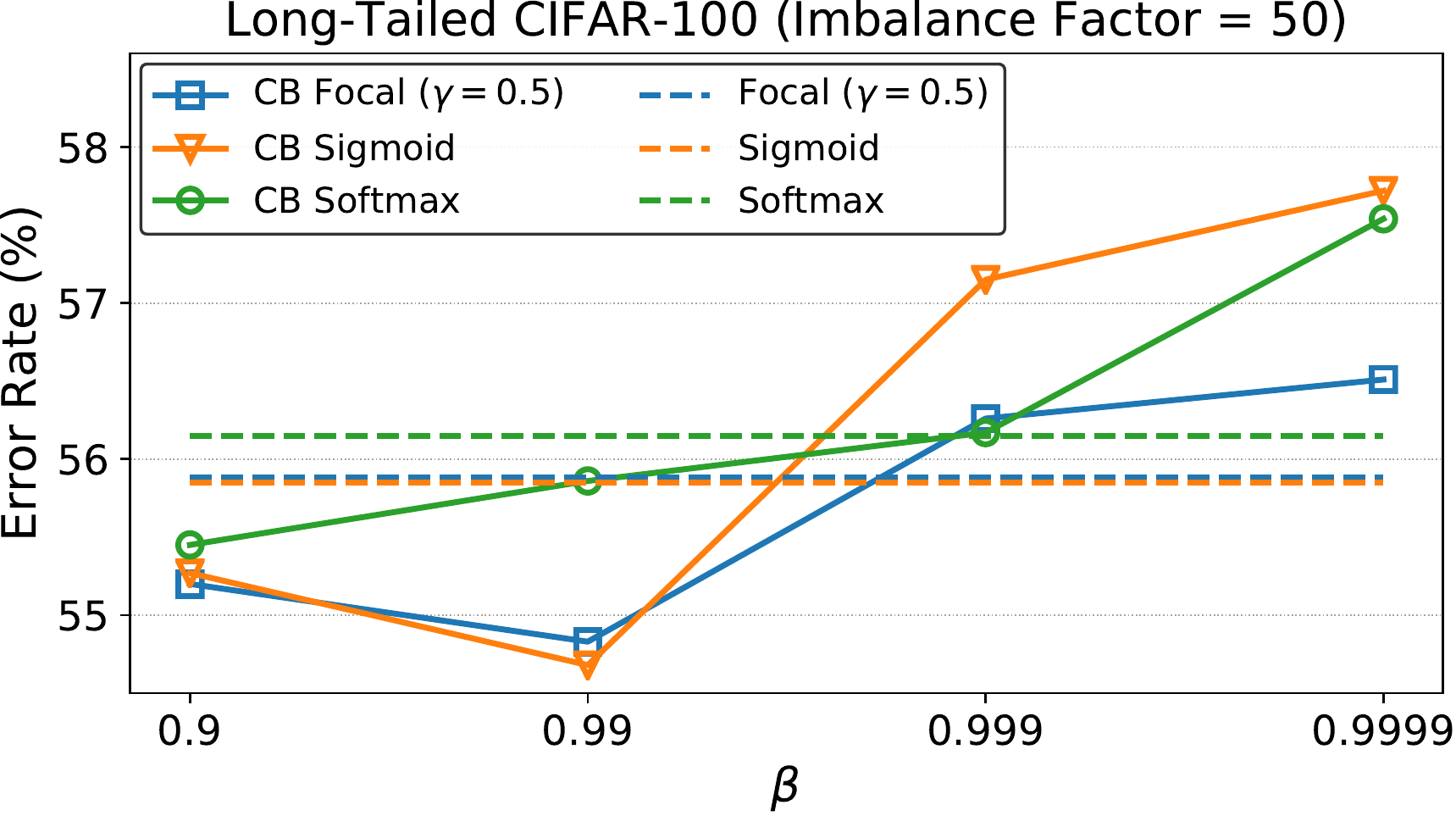}
\label{fig:cifar100_beta}}
\end{center}
\caption{Classification error rate when trained with and without the class-balanced term. On CIFAR-10, class-balanced loss yields consistent improvement across different $\beta$ and the larger the $\beta$ is, the larger the improvement is. On CIFAR-100, $\beta = 0.99$ or $\beta = 0.999$ improves the original loss, whereas a larger $\beta$ hurts the performance.}
\label{fig:cifar_beta}
\end{figure*}

\begin{figure*}[t]
\begin{center}
\subfigure{
\includegraphics[width=1.0\columnwidth]{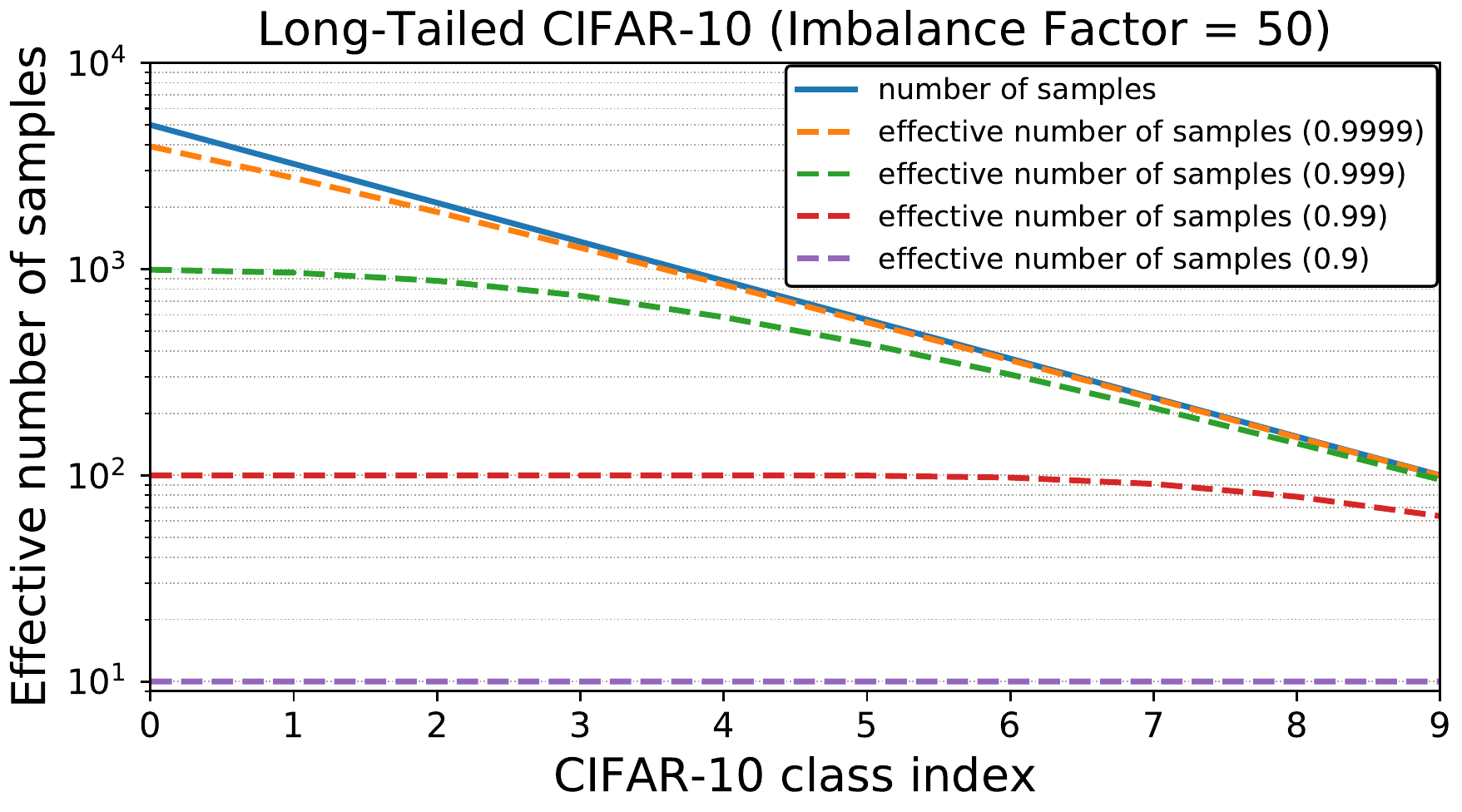}
\label{fig:cifar10_effec_num}}
\subfigure{
\includegraphics[width=1.0\columnwidth]{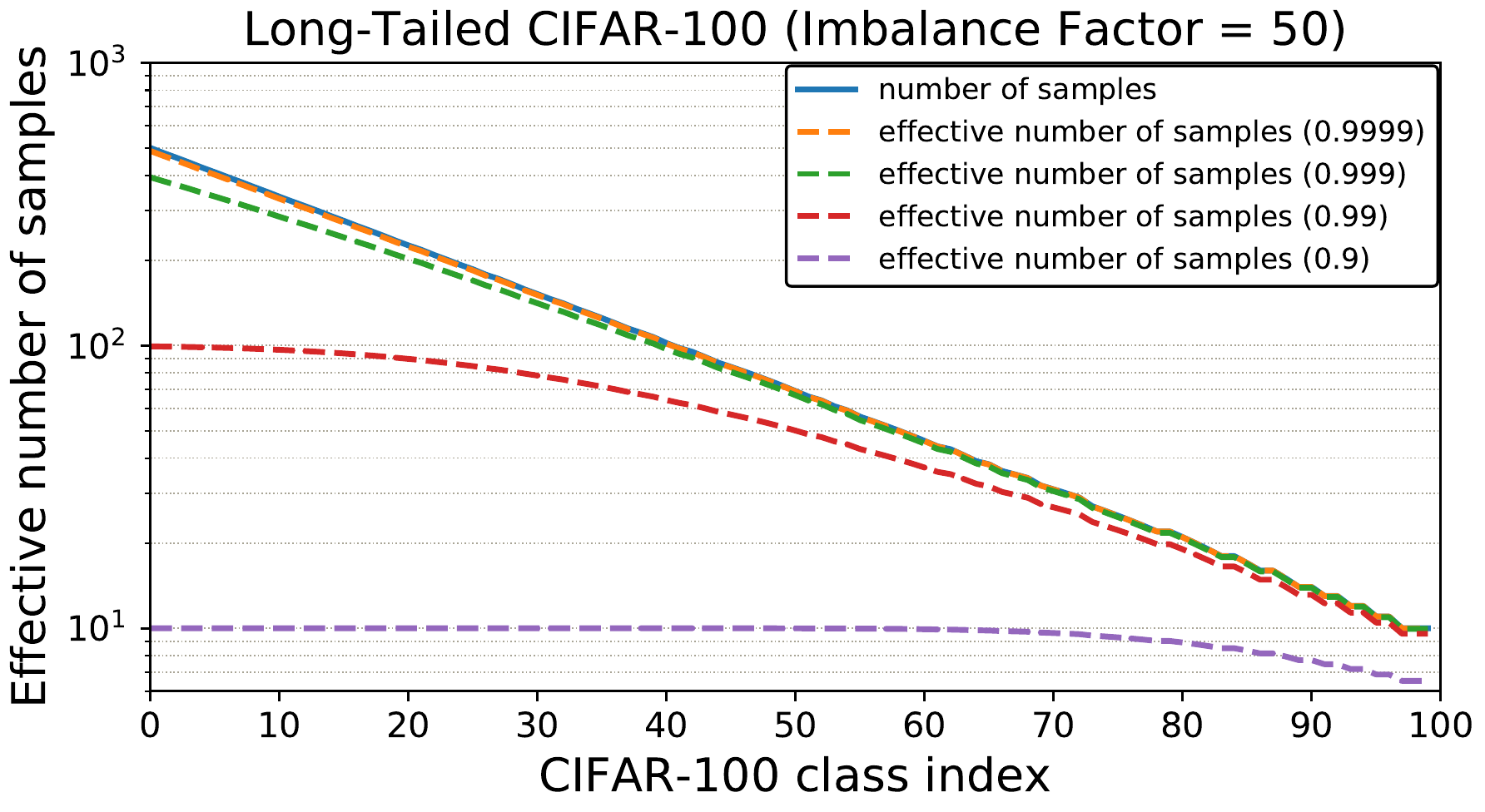}
\label{fig:cifar100_effec_num}}
\end{center}
\caption{Effective number of samples with different $\beta$ on long-tailed CIFAR-10 and CIFAR-100 with the imbalance of $50$. 
This is a semi-log plot with vertical axis in log scale.
When $\beta \to 1$, effective number of samples is same as number of samples. When $\beta$ is small, effective number of samples are similar across all classes.}
\label{fig:cifar_effec_num}
\end{figure*}

\subsection{Visual Recognition on Long-Tailed CIFAR}
\label{sec:analysis}
We conduct extensive studies on long-tailed CIFAR datasets with various imbalance factors.
Table~\ref{tab:cifar_results} shows the performance of ResNet-32 in terms of classification error rate on the test set.
We present results of using softmax cross-entropy loss, sigmoid cross-entroy loss, focal loss with different $\gamma$, and the proposed class-balanced loss with best hyperparameters chosen via cross-validation.
The search space of hyperparameters is $\{$softmax, sigmoid, focal$\}$ for loss type, $\beta \in \{0.9, 0.99, 0.999, 0.9999\}$ (Section \ref{sec:cb_loss}), and $\gamma \in \{0.5, 1.0, 2.0\}$ for focal loss~\cite{focal_loss}.

From results in Table~\ref{tab:cifar_results}, we have the following observations:
(1) With properly selected hyperparameters, class-balanced loss is able to significantly improve the performance of commonly used loss functions on long-tailed datasets.
(2) Softmax cross-entropy is overwelmingly used as the loss function for visual recognition tasks. However, following the training strategy in Section~\ref{sec:implementation}, sigmoid cross-entropy and focal loss are able to outperform softmax cross-entropy in most cases.
(3) The best $\beta$ is $0.9999$ on CIFAR-10 unanimously. But on CIFAR-100, datasets with different imbalance factors tend to have different and smaller optimal $\beta$.

To understand the role of $\beta$ and class-balanced loss better, we use the long-tailed dataset with imbalance factor of $50$ as an example to show the error rate of the model when trained with and without the class-balanced term in Figure~\ref{fig:cifar_beta}.
Interestingly, for CIFAR-10, class-balanced term always improves the performance of the original loss and more performance gain can be obtained with larger $\beta$.
However, on CIFAR-100, only small values of $\beta$ improve the performance, whereas larger values degrade the performance.
Figure~\ref{fig:cifar_effec_num} illustrates the effective number of samples under different $\beta$.
On CIFAR-10, when re-weighting based on $\beta = 0.9999$, the effective number of samples is close to the number of samples.
This means the best re-weighting strategy on CIFAR-10 is similar with re-weighting by inverse class frequency.
On CIFAR-100, the poor performance of using larger $\beta$ suggests that re-weighting by inverse class frequency is not a wise choice.
Instead, we need to use a smaller $\beta$ that has smoother weights across classes.
This is reasonable because $\beta = (N-1)/N$, so larger $\beta$ means larger $N$.
As discussed in Section~\ref{sec:effec_num}, $N$ can be interpreted as the number of unique prototypes.
A fine-grained dataset should have a smaller $N$ compared with a coarse-grained one.
For example, the number of unique prototypes of a specific bird species should be smaller than the number of unique prototypes of a generic bird class.
Since classes in CIFAR-100 are more fine-grained than CIFAR-10, CIFAR-100 should have smaller $N$ compared with CIFAR-10.
This explains our observations on the effect of $\beta$.

\begin{table*}[t]
\begin{center}
\begin{tabular}{ lcccccccccc }
\hline
&  &  & & & \multicolumn{2}{|c}{iNaturalist 2017} & \multicolumn{2}{|c}{iNaturalist 2018} & \multicolumn{2}{|c}{ILSVRC 2012} \\
\hline
\multicolumn{1}{c}{Network} & \multicolumn{1}{|c}{Loss} &
\multicolumn{1}{|c}{$\beta$} &
\multicolumn{1}{|c}{$\gamma$} &
\multicolumn{1}{|c}{Input Size} & \multicolumn{1}{|c}{Top-1} & \multicolumn{1}{|c}{Top-5} & \multicolumn{1}{|c}{Top-1} & \multicolumn{1}{|c}{Top-5} & \multicolumn{1}{|c}{Top-1} & \multicolumn{1}{|c}{Top-5} \\
\hline
\multicolumn{1}{c}{ResNet-50} & \multicolumn{1}{|c}{Softmax} &
\multicolumn{1}{|r}{-} & \multicolumn{1}{|r}{-} &
\multicolumn{1}{|r}{224 $\times$ 224} & \multicolumn{1}{|r}{45.38} & \multicolumn{1}{|r}{22.67} & \multicolumn{1}{|r}{42.86} & \multicolumn{1}{|r}{21.31} & \multicolumn{1}{|r}{23.92} & \multicolumn{1}{|r}{7.03} \\
\multicolumn{1}{c}{ResNet-101} & \multicolumn{1}{|c}{Softmax} &
\multicolumn{1}{|r}{-} & \multicolumn{1}{|r}{-} & \multicolumn{1}{|r}{224 $\times$ 224} & \multicolumn{1}{|r}{42.57} & \multicolumn{1}{|r}{20.42} & \multicolumn{1}{|r}{39.47} & \multicolumn{1}{|r}{18.86} & \multicolumn{1}{|r}{22.65} & \multicolumn{1}{|r}{6.47} \\
\multicolumn{1}{c}{ResNet-152} & \multicolumn{1}{|c}{Softmax} &
\multicolumn{1}{|r}{-} & \multicolumn{1}{|r}{-} & \multicolumn{1}{|r}{224 $\times$ 224} & \multicolumn{1}{|r}{41.42} & \multicolumn{1}{|r}{19.47} & \multicolumn{1}{|r}{38.61} & \multicolumn{1}{|r}{18.07} & \multicolumn{1}{|r}{21.68} & \multicolumn{1}{|r}{5.92} \\
\hline
\multicolumn{1}{c}{ResNet-50} & \multicolumn{1}{|c}{CB Focal} & \multicolumn{1}{|r}{0.999} & \multicolumn{1}{|r}{0.5} & \multicolumn{1}{|r}{224 $\times$ 224} & \multicolumn{1}{|r}{41.92} & \multicolumn{1}{|r}{20.92} & \multicolumn{1}{|r}{38.88} & \multicolumn{1}{|r}{18.97} & \multicolumn{1}{|r}{22.71} & \multicolumn{1}{|r}{6.72} \\
\multicolumn{1}{c}{ResNet-101} & \multicolumn{1}{|c}{CB Focal} &
\multicolumn{1}{|r}{0.999} & \multicolumn{1}{|r}{0.5} & \multicolumn{1}{|r}{224 $\times$ 224} & \multicolumn{1}{|r}{39.06} & \multicolumn{1}{|r}{18.96} & \multicolumn{1}{|r}{36.12} & \multicolumn{1}{|r}{17.18} & \multicolumn{1}{|r}{21.57} & \multicolumn{1}{|r}{5.91} \\
\multicolumn{1}{c}{ResNet-152} & \multicolumn{1}{|c}{CB Focal} &
\multicolumn{1}{|r}{0.999} & \multicolumn{1}{|r}{0.5} & \multicolumn{1}{|r}{224 $\times$ 224} & \multicolumn{1}{|r}{38.06} & \multicolumn{1}{|r}{18.42} & \multicolumn{1}{|r}{35.21} & \multicolumn{1}{|r}{16.34} & \multicolumn{1}{|r}{20.87} & \multicolumn{1}{|r}{5.61} \\
\hline
\multicolumn{1}{c}{ResNet-50} & \multicolumn{1}{|c}{CB Focal} & \multicolumn{1}{|r}{0.999} & \multicolumn{1}{|r}{0.5} & \multicolumn{1}{|r}{320 $\times$ 320} & \multicolumn{1}{|r}{38.16} & \multicolumn{1}{|r}{18.28} & \multicolumn{1}{|r}{35.84} & \multicolumn{1}{|r}{16.85} & \multicolumn{1}{|r}{21.99} & \multicolumn{1}{|r}{6.27} \\
\multicolumn{1}{c}{ResNet-101} & \multicolumn{1}{|c}{CB Focal} &
\multicolumn{1}{|r}{0.999} & \multicolumn{1}{|r}{0.5} & \multicolumn{1}{|r}{320 $\times$ 320} & \multicolumn{1}{|r}{34.96} & \multicolumn{1}{|r}{15.90} & \multicolumn{1}{|r}{32.02} & \multicolumn{1}{|r}{14.27} & \multicolumn{1}{|r}{20.25} & \multicolumn{1}{|r}{5.34} \\
\multicolumn{1}{c}{ResNet-152} & \multicolumn{1}{|c}{CB Focal} &
\multicolumn{1}{|r}{0.999} & \multicolumn{1}{|r}{0.5} & \multicolumn{1}{|r}{320 $\times$ 320} & \multicolumn{1}{|r}{33.73} & \multicolumn{1}{|r}{14.96} & \multicolumn{1}{|r}{30.95} & \multicolumn{1}{|r}{13.54} & \multicolumn{1}{|r}{19.72} & \multicolumn{1}{|r}{4.97} \\ \hline
\end{tabular}
\end{center}
\caption{Classification error rate on large-scale datasets trained with different loss functions. The proposed class-balanced term combined with focal loss (CB Focal) is able to outperform softmax cross-entropy by a large margin.}
\label{tab:large_results}
\end{table*}

\begin{figure*}[t]
\begin{center}
\subfigure{
\includegraphics[width=0.95\columnwidth]{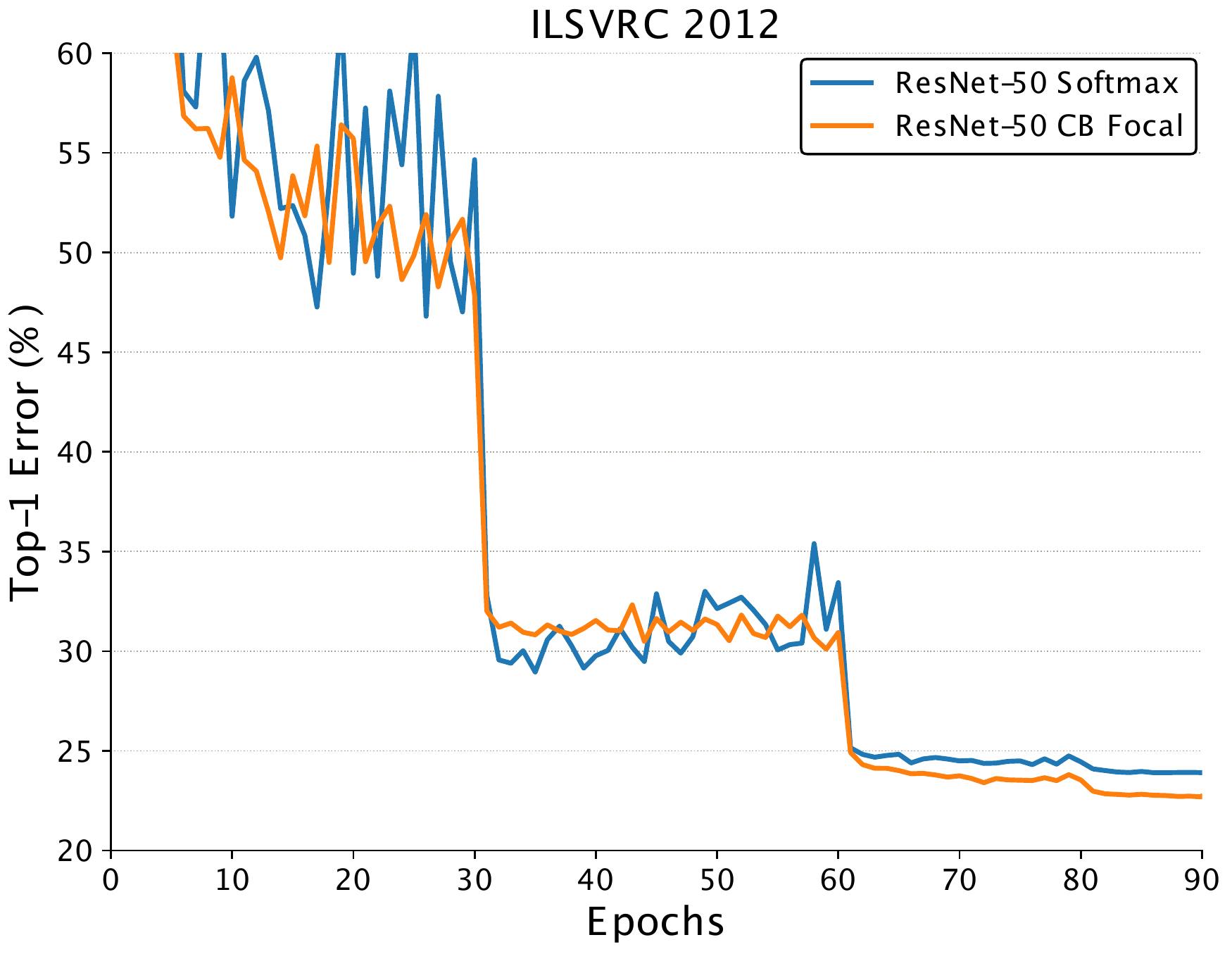}
\label{fig:curve_imagenet}}
\subfigure{
\includegraphics[width=0.95\columnwidth]{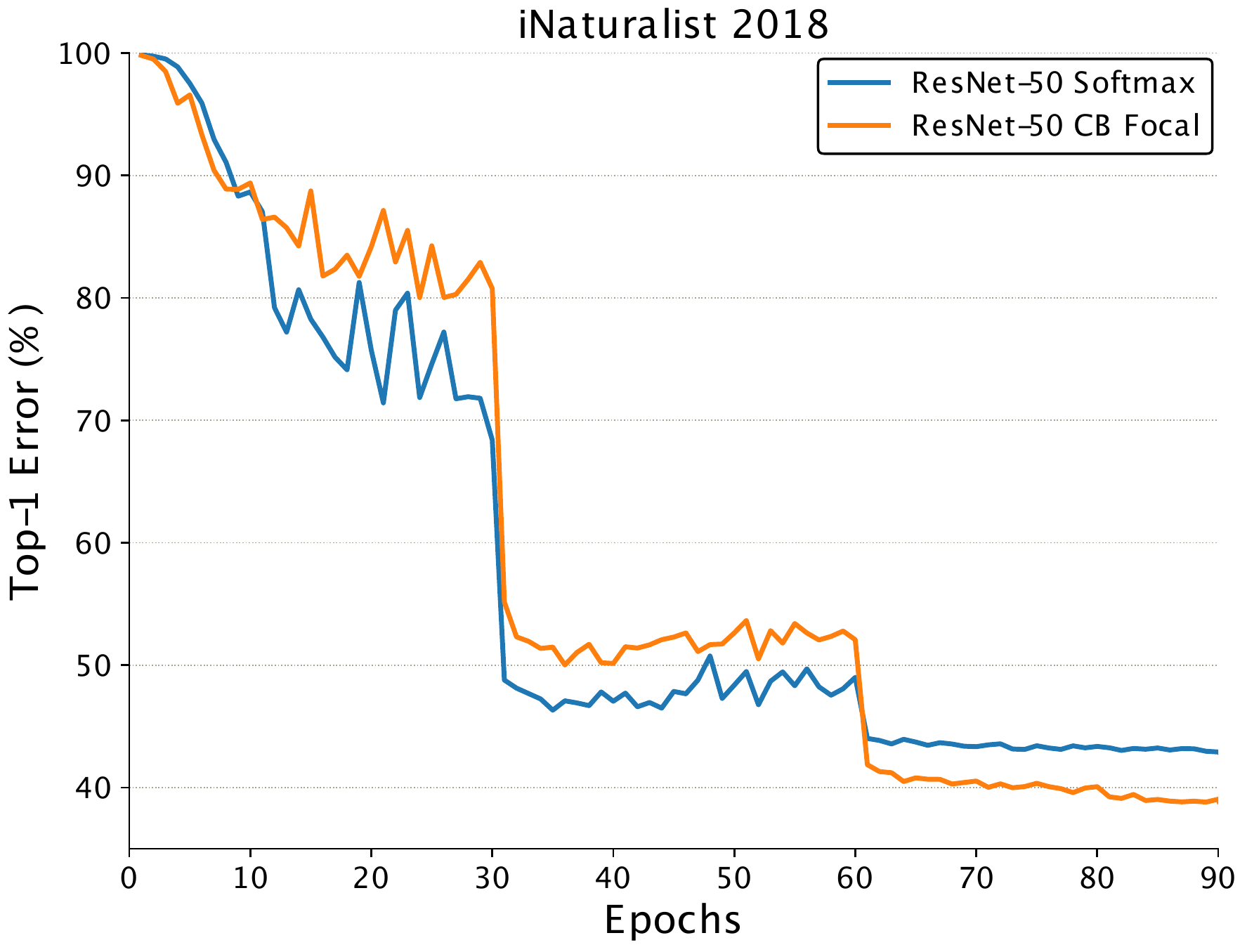}
\label{fig:curve_inat}}
\end{center}
\caption{Training curves of ResNet-50 on ILSVRC 2012 (left) and iNaturalist 2018 (right). Class-balanced focal loss with $\beta=0.999$ and $\gamma=0.5$ outperforms softmax cross-entropy after $60$ epochs.}
\label{fig:curve}
\end{figure*}

\subsection{Visual Recognition on Large-Scale Datasets}
To demonstrate the proposed class-balanced loss can be used on large-scale real-world datasets, we present results of training ResNets with different depths on iNaturalist 2017, iNaturalist 2018 and ILSVRC 2012.

Table~\ref{tab:large_results} summarizes the top-1 and top-5 error rate on the validation set of all datasets. We use the class-balanced focal loss since it has more flexibility and find $\beta = 0.999$ and $\gamma = 0.5$ yield reasonably good performance on all datasets.
From results we can see that we are able to outperform commonly used softmax cross-entropy loss on ILSVRC 2012, and by large margins on iNaturalist.
Notably, ResNet-50 is able to achieve comparable performance with ResNet-152 on iNaturalist and ResNet-101 on ILSVRC 2012 when using class-balanced focal loss to replace softmax cross-entropy loss.
Training curves on ILSVRC 2012 and iNaturalist 2018 are shown in Figure~\ref{fig:curve}.
Class-balanced focal loss starts to show its advantage after $60$ epochs of training.

\section{Conclusion and Discussion}
In this work, we have presented a theoretically sounded framework to address the problem of long-tailed distribution of training data.
The key idea is to take data overlap into consideration to help quantify the effective number of samples.
Following this framework, we further propose a~\emph{class-balanced loss} to re-weight loss inversely with the effective number of samples per class.
Extensive studies on artificially induced long-tailed CIFAR datasets have been conducted to understand and analyze the proposed loss.
The benefit of the class-balanced loss has been verified by experiments on both CIFAR and large-scale datasets including iNaturalist and ImageNet.

Our proposed framework provides a non-parametric means of quantifying data overlap, since we don't make any assumptions about the data distribution. 
This makes our loss generally applicable to a wide range of existing models and loss functions.
Intuitively, a better estimation of the effective number of samples could be obtained if we know the data distribution.
In the future, we plan to extend our framework by incorporating reasonable assumptions on the data distribution or designing learning-based, adaptive methods.

\par\noindent\textbf{Acknowledgment.}
This work was supported in part by a Google Focused Research Award.

\section*{Appendix A: More Experimental Results}
We present more comprehensive experimental results in this appendix.

\textbf{Visual Recognition on Long-Tailed CIFAR.}
On long-tailed CIFAR datasets with imbalance factors of $200$, $100$, $50$, $20$ and $10$, we trained ResNet-32 models~\cite{resnet} using softmax loss (SM), sigmoid loss (SGM) and focal loss with both the original loss and class-balanced variants with $\beta = 0.9$, $0.99$, $0.999$ and $0.9999$.
For focal loss, we used $\gamma = 0.5$, $1.0$ and $2.0$.
In addition to long-tailed CIFAR-10 and CIFAR-100 datasets mentioned in Section~\ref{sec:data} of the main paper, we also conduct experiments on CIFAR-20 dataset, which has same images as CIFAR-100 dataset but annotated with 20 coarse-grained class-labels~\cite{cifar}.

Classification error rates on long-tailed CIFAR-10, CIFAR-20 and CIFAR-100 datasets are shown in Figure~\ref{fig:cifar10_heatmap}, Figure~\ref{fig:cifar20_heatmap} and Figure~\ref{fig:cifar100_heatmap} respectively.
Each row in the figure corresponds to the model trained with a specific loss function, in the form of \{loss name\}\underline{\space}\{$\gamma$\}\underline{\space}\{$\beta$\}, and each column corresponds to a long-tailed dataset with specific imbalance factor.
We color-code each column to visualize results, with lighter colors represent lower error rates and darker colors for higher error rates.
Note that results in each column of Table~\ref{tab:cifar_results} in the main paper are classification error rates using the original losses and the best-performed class-balanced loss that is same as the lowest error rates in the corresponding column of Figure~\ref{fig:cifar10_heatmap} and Figure~\ref{fig:cifar100_heatmap} (marked by underline).
From these results, we can see that in general, higher $\beta$ yields better performance on CIFAR-10.
However, on CIFAR-20 and CIFAR-100, lower $\beta$ is needed to achieve good performance, suggesting that we cannot directly re-weight the loss by inverse class-frequency, but to re-weight based on the effective number of samples. 
These results support the analysis in Section~\ref{sec:analysis} of our main paper.

\begin{figure}[h]
\begin{center}
\includegraphics[width=1.0\columnwidth]{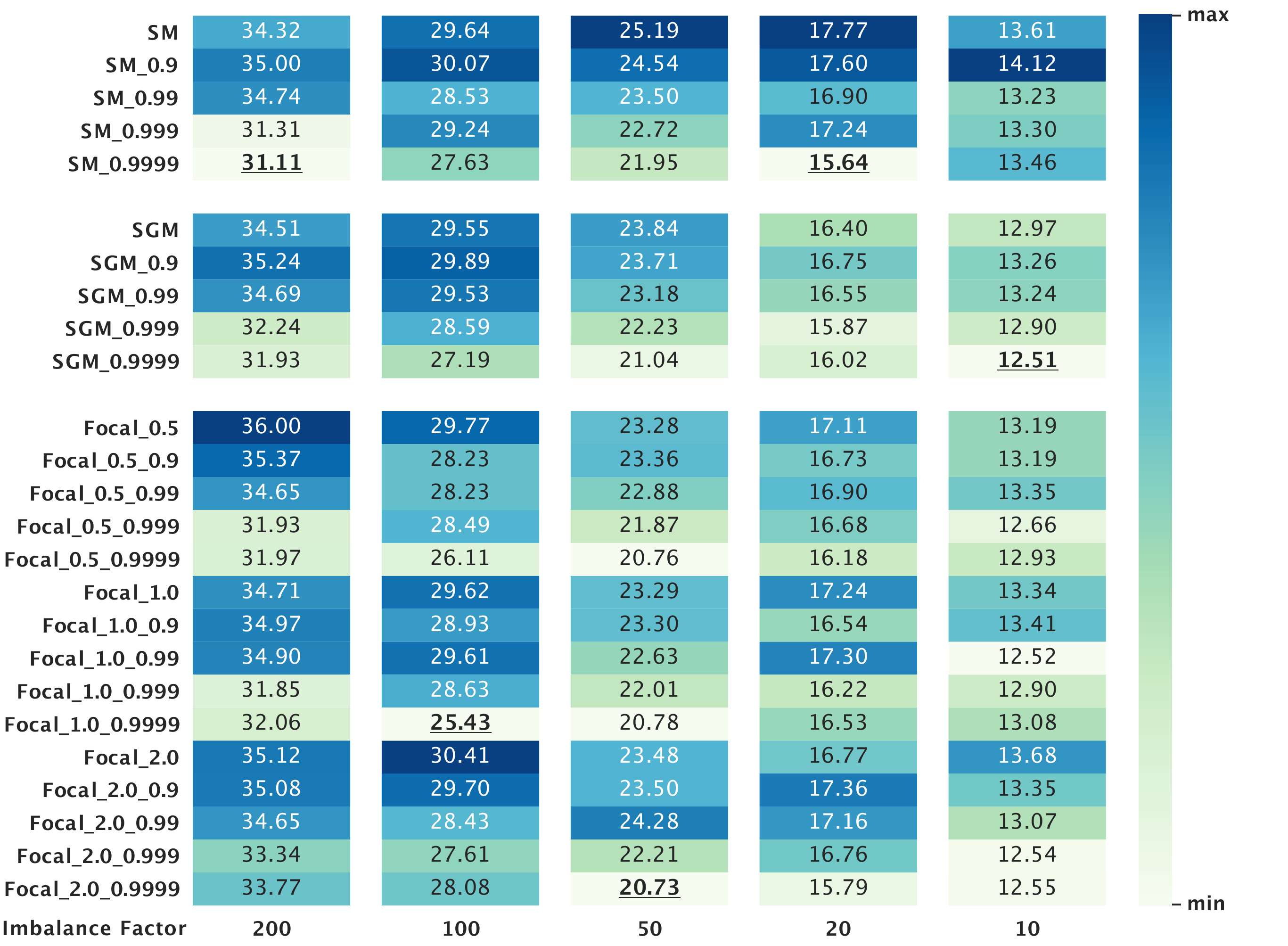}
\end{center}
\caption{Classification error rates of ResNet-32 models trained with different loss functions on long-tailed CIFAR-10.}
\label{fig:cifar10_heatmap}
\end{figure}

\begin{figure}[h]
\begin{center}
\includegraphics[width=1.0\columnwidth]{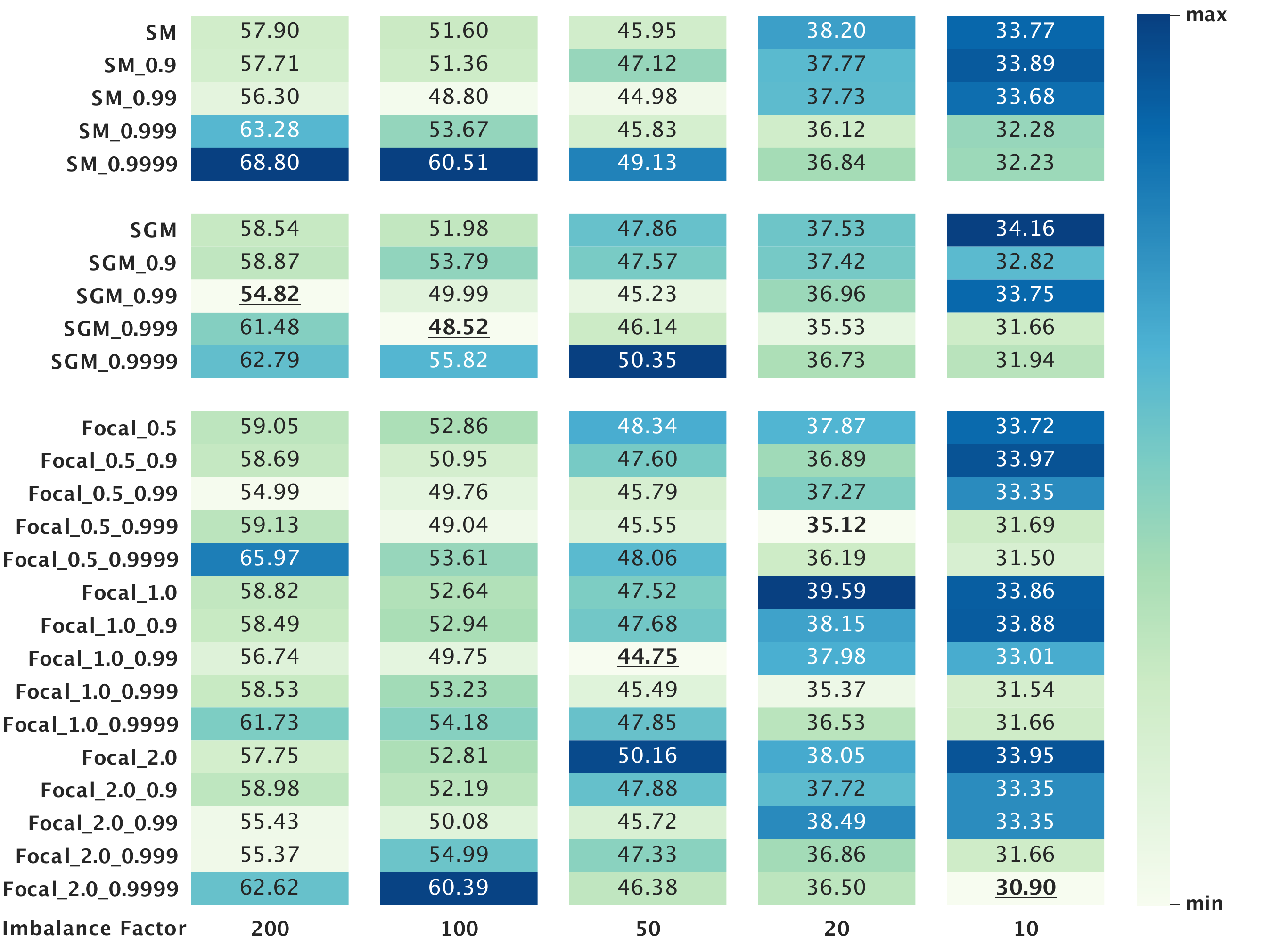}
\end{center}
\caption{Classification error rates of ResNet-32 models trained with different loss functions on long-tailed CIFAR-20 (CIFAR-100 with 20 coarse-grained class-labels).}
\label{fig:cifar20_heatmap}
\end{figure}

\begin{figure}[h]
\begin{center}
\includegraphics[width=1.0\columnwidth]{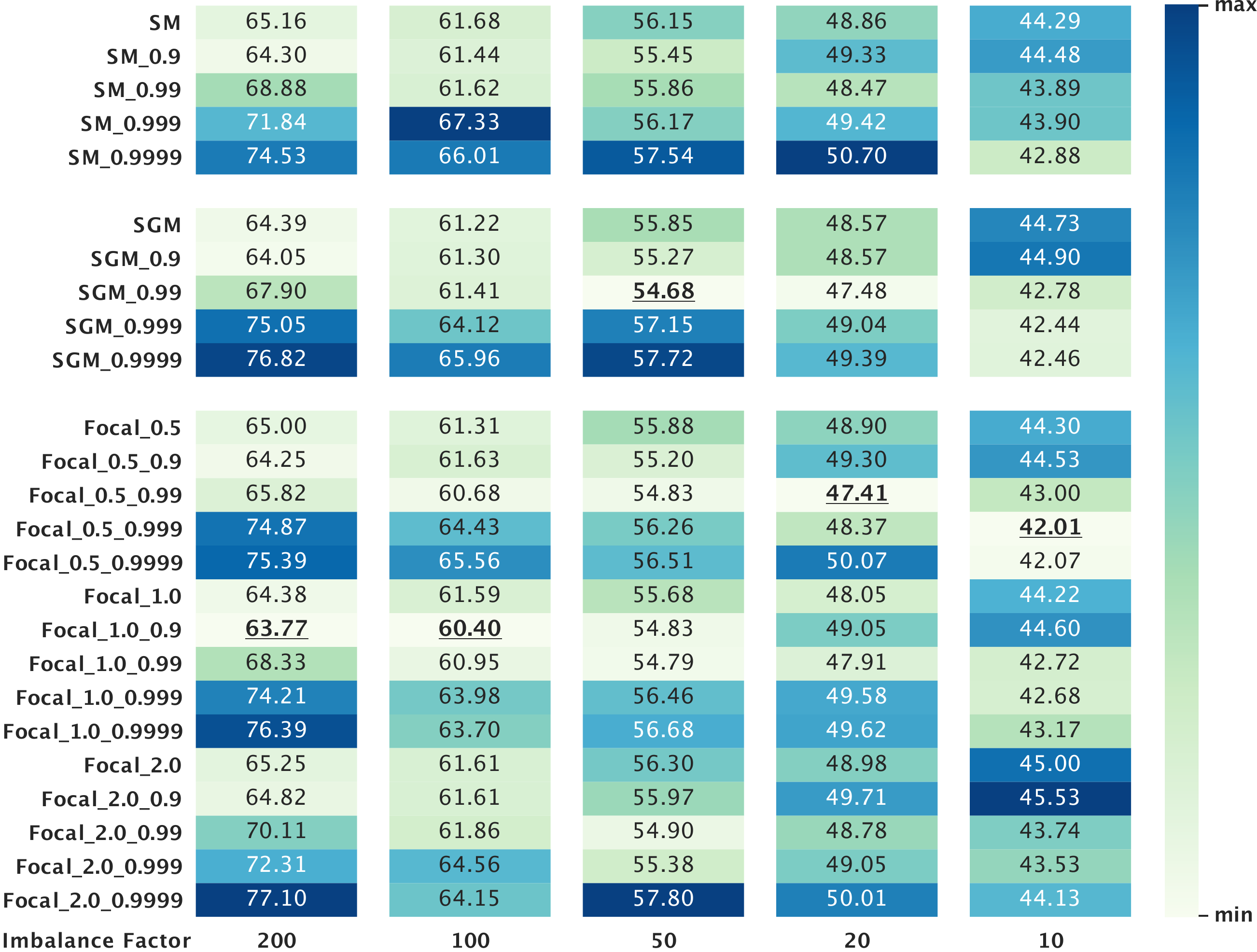}
\end{center}
\caption{Classification error rates of ResNet-32 models trained with different loss functions on long-tailed CIFAR-100.}
\label{fig:cifar100_heatmap}
\end{figure}

{\small
\bibliographystyle{ieee}
\bibliography{ref}

\begin{thebibliography}{10}\itemsep=-1pt

\bibitem{inat18}
The i{N}aturalist 2018 {C}ompetition {D}ataset.
\newblock \url{https://github.com/visipedia/inat_comp}.

\bibitem{tensorflow}
M.~Abadi, P.~Barham, J.~Chen, Z.~Chen, A.~Davis, J.~Dean, M.~Devin,
  S.~Ghemawat, G.~Irving, M.~Isard, et~al.
\newblock Tensorflow: A system for large-scale machine learning.
\newblock In {\em OSDI}, 2016.

\bibitem{bengio2015sharing}
S.~Bengio.
\newblock Sharing representations for long tail computer vision problems.
\newblock In {\em ICMI}, 2015.

\bibitem{buda2018systematic}
M.~Buda, A.~Maki, and M.~A. Mazurowski.
\newblock A systematic study of the class imbalance problem in convolutional
  neural networks.
\newblock {\em Neural Networks}, 2018.

\bibitem{chawla2002smote}
N.~V. Chawla, K.~W. Bowyer, L.~O. Hall, and W.~P. Kegelmeyer.
\newblock Smote: synthetic minority over-sampling technique.
\newblock {\em JAIR}, 2002.

\bibitem{cui2018large}
Y.~Cui, Y.~Song, C.~Sun, A.~Howard, and S.~Belongie.
\newblock Large scale fine-grained categorization and domain-specific transfer
  learning.
\newblock In {\em CVPR}, 2018.

\bibitem{imagenet}
J.~Deng, W.~Dong, R.~Socher, L.-J. Li, K.~Li, and L.~Fei-Fei.
\newblock Imagenet: A large-scale hierarchical image database.
\newblock In {\em CVPR}, 2009.

\bibitem{dong2017class}
Q.~Dong, S.~Gong, and X.~Zhu.
\newblock Class rectification hard mining for imbalanced deep learning.
\newblock In {\em ICCV}, 2017.

\bibitem{drummond2003c4}
C.~Drummond, R.~C. Holte, et~al.
\newblock C4. 5, class imbalance, and cost sensitivity: why under-sampling
  beats over-sampling.
\newblock In {\em ICML Workshop}, 2003.

\bibitem{elkan2001foundations}
C.~Elkan.
\newblock The foundations of cost-sensitive learning.
\newblock In {\em IJCAI}, 2001.

\bibitem{boosting}
Y.~Freund and R.~E. Schapire.
\newblock A decision-theoretic generalization of on-line learning and an
  application to boosting.
\newblock {\em Journal of computer and system sciences}, 1997.

\bibitem{geifman2017deep}
Y.~Geifman and R.~El-Yaniv.
\newblock Deep active learning over the long tail.
\newblock {\em arXiv preprint arXiv:1711.00941}, 2017.

\bibitem{goyal2017accurate}
P.~Goyal, P.~Doll{\'a}r, R.~Girshick, P.~Noordhuis, L.~Wesolowski, A.~Kyrola,
  A.~Tulloch, Y.~Jia, and K.~He.
\newblock Accurate, large minibatch sgd: training imagenet in 1 hour.
\newblock {\em arXiv preprint arXiv:1706.02677}, 2017.

\bibitem{he2008adasyn}
H.~He, Y.~Bai, E.~A. Garcia, and S.~Li.
\newblock Adasyn: Adaptive synthetic sampling approach for imbalanced learning.
\newblock In {\em IEEE International Joint Conference on Neural Networks},
  2008.

\bibitem{he2008learning}
H.~He and E.~A. Garcia.
\newblock Learning from imbalanced data.
\newblock {\em IEEE Transactions on Knowledge \& Data Engineering}, 2008.

\bibitem{resnet}
K.~He, X.~Zhang, S.~Ren, and J.~Sun.
\newblock Deep residual learning for image recognition.
\newblock In {\em CVPR}, 2016.

\bibitem{huang2016learning}
C.~Huang, Y.~Li, C.~Change~Loy, and X.~Tang.
\newblock Learning deep representation for imbalanced classification.
\newblock In {\em CVPR}, 2016.

\bibitem{janson1986random}
S.~Janson.
\newblock Random coverings in several dimensions.
\newblock {\em Acta Mathematica}, 1986.

\bibitem{japkowicz2002class}
N.~Japkowicz and S.~Stephen.
\newblock The class imbalance problem: A systematic study.
\newblock {\em Intelligent data analysis}, 2002.

\bibitem{kahn1953methods}
H.~Kahn and A.~W. Marshall.
\newblock Methods of reducing sample size in monte carlo computations.
\newblock {\em Journal of the Operations Research Society of America}, 1953.

\bibitem{kendall1946advanced}
M.~G. Kendall et~al.
\newblock The advanced theory of statistics.
\newblock {\em The advanced theory of statistics.}, (2nd Ed), 1946.

\bibitem{khan2018cost}
S.~H. Khan, M.~Hayat, M.~Bennamoun, F.~A. Sohel, and R.~Togneri.
\newblock Cost-sensitive learning of deep feature representations from
  imbalanced data.
\newblock {\em IEEE transactions on neural networks and learning systems},
  2018.

\bibitem{koh2017understanding}
P.~W. Koh and P.~Liang.
\newblock Understanding black-box predictions via influence functions.
\newblock In {\em ICML}, 2017.

\bibitem{cifar}
A.~Krizhevsky and G.~Hinton.
\newblock Learning multiple layers of features from tiny images.
\newblock Technical report, Citeseer, 2009.

\bibitem{alexnet}
A.~Krizhevsky, I.~Sutskever, and G.~E. Hinton.
\newblock Imagenet classification with deep convolutional neural networks.
\newblock In {\em Neural Information Processing Systems}, 2012.

\bibitem{focal_loss}
T.-Y. Lin, P.~Goyal, R.~Girshick, K.~He, and P.~Doll{\'a}r.
\newblock Focal loss for dense object detection.
\newblock {\em PAMI}, 2018.

\bibitem{coco}
T.-Y. Lin, M.~Maire, S.~Belongie, J.~Hays, P.~Perona, D.~Ramanan,
  P.~Doll{\'a}r, and C.~L. Zitnick.
\newblock Microsoft coco: Common objects in context.
\newblock In {\em ECCV}, 2014.

\bibitem{uru}
D.~Mahajan, R.~Girshick, V.~Ramanathan, K.~He, M.~Paluri, Y.~Li, A.~Bharambe,
  and L.~van~der Maaten.
\newblock Exploring the limits of weakly supervised pretraining.
\newblock In {\em ECCV}, 2018.

\bibitem{malisiewicz2011ensemble}
T.~Malisiewicz, A.~Gupta, and A.~A. Efros.
\newblock Ensemble of exemplar-svms for object detection and beyond.
\newblock In {\em ICCV}, 2011.

\bibitem{word2vec}
T.~Mikolov, I.~Sutskever, K.~Chen, G.~S. Corrado, and J.~Dean.
\newblock Distributed representations of words and phrases and their
  compositionality.
\newblock In {\em Neural Information Processing Systems}, 2013.

\bibitem{ouyang2016factors}
W.~Ouyang, X.~Wang, C.~Zhang, and X.~Yang.
\newblock Factors in finetuning deep model for object detection with long-tail
  distribution.
\newblock In {\em CVPR}, 2016.

\bibitem{ren2018learning}
M.~Ren, W.~Zeng, B.~Yang, and R.~Urtasun.
\newblock Learning to reweight examples for robust deep learning.
\newblock In {\em ICML}, 2018.

\bibitem{ilsvrc}
O.~Russakovsky, J.~Deng, H.~Su, J.~Krause, S.~Satheesh, S.~Ma, Z.~Huang,
  A.~Karpathy, A.~Khosla, M.~Bernstein, et~al.
\newblock Imagenet large scale visual recognition challenge.
\newblock {\em IJCV}, 2015.

\bibitem{sarafianos2018deep}
N.~Sarafianos, X.~Xu, and I.~A. Kakadiaris.
\newblock Deep imbalanced attribute classification using visual attention
  aggregation.
\newblock In {\em ECCV}, 2018.

\bibitem{shen2016relay}
L.~Shen, Z.~Lin, and Q.~Huang.
\newblock Relay backpropagation for effective learning of deep convolutional
  neural networks.
\newblock In {\em ECCV}, 2016.

\bibitem{vggnet}
K.~Simonyan and A.~Zisserman.
\newblock Very deep convolutional networks for large-scale image recognition.
\newblock {\em arXiv preprint arXiv:1409.1556}, 2014.

\bibitem{googlenet}
C.~Szegedy, W.~Liu, Y.~Jia, P.~Sermanet, S.~Reed, D.~Anguelov, D.~Erhan,
  V.~Vanhoucke, and A.~Rabinovich.
\newblock Going deeper with convolutions.
\newblock In {\em CVPR}, 2015.

\bibitem{ting2000comparative}
K.~M. Ting.
\newblock A comparative study of cost-sensitive boosting algorithms.
\newblock In {\em ICML}, 2000.

\bibitem{tiny-images}
A.~Torralba, R.~Fergus, and W.~T. Freeman.
\newblock 80 million tiny images: A large data set for nonparametric object and
  scene recognition.
\newblock {\em PAMI}, 2008.

\bibitem{inaturalist}
G.~Van~Horn, O.~Mac~Aodha, Y.~Song, Y.~Cui, C.~Sun, A.~Shepard, H.~Adam,
  P.~Perona, and S.~Belongie.
\newblock The inaturalist species classification and detection dataset.
\newblock In {\em CVPR}, 2018.

\bibitem{van2017devil}
G.~Van~Horn and P.~Perona.
\newblock The devil is in the tails: Fine-grained classification in the wild.
\newblock {\em arXiv preprint arXiv:1709.01450}, 2017.

\bibitem{cub200}
C.~Wah, S.~Branson, P.~Welinder, P.~Perona, and S.~Belongie.
\newblock The caltech-ucsd birds-200-2011 dataset.
\newblock {\em California Institute of Technology}, 2011.

\bibitem{wang2017learning}
Y.-X. Wang, D.~Ramanan, and M.~Hebert.
\newblock Learning to model the tail.
\newblock In {\em Neural Information Processing Systems}, 2017.

\bibitem{yin2018feature}
X.~Yin, X.~Yu, K.~Sohn, X.~Liu, and M.~Chandraker.
\newblock Feature transfer learning for deep face recognition with long-tail
  data.
\newblock {\em arXiv preprint arXiv:1803.09014}, 2018.

\bibitem{you2018scalable}
C.~You, C.~Li, D.~P. Robinson, and R.~Vidal.
\newblock A scalable exemplar-based subspace clustering algorithm for
  class-imbalanced data.
\newblock In {\em European Conference on Computer Vision}, 2018.

\bibitem{zagoruyko2016wide}
S.~Zagoruyko and N.~Komodakis.
\newblock Wide residual networks.
\newblock In {\em BMVC}, 2016.

\bibitem{zhang2017range}
X.~Zhang, Z.~Fang, Y.~Wen, Z.~Li, and Y.~Qiao.
\newblock Range loss for deep face recognition with long-tailed training data.
\newblock In {\em CVPR}, 2017.

\bibitem{places}
B.~Zhou, A.~Lapedriza, A.~Khosla, A.~Oliva, and A.~Torralba.
\newblock Places: A 10 million image database for scene recognition.
\newblock {\em PAMI}, 2017.

\bibitem{zhou2006training}
Z.-H. Zhou and X.-Y. Liu.
\newblock Training cost-sensitive neural networks with methods addressing the
  class imbalance problem.
\newblock {\em IEEE Transactions on Knowledge and Data Engineering}, 2006.

\bibitem{zou2018unsupervised}
Y.~Zou, Z.~Yu, B.~V. Kumar, and J.~Wang.
\newblock Unsupervised domain adaptation for semantic segmentation via
  class-balanced self-training.
\newblock In {\em ECCV}, 2018.

\end{thebibliography}
}

\end{document}